\documentclass{article}
\usepackage{arxiv}

\usepackage[utf8]{inputenc} 
\usepackage[T1]{fontenc}    
\usepackage{hyperref}       
\usepackage{url}            
\usepackage{booktabs}       
\usepackage{amsfonts}       
\usepackage{nicefrac}       
\usepackage{microtype}      
\usepackage{lipsum}
\usepackage{graphicx}
\graphicspath{ {./images/} }

\usepackage{times}
\usepackage{soul}
\usepackage{url}

\usepackage[small]{caption}
\usepackage{graphicx}
\usepackage{amsmath}
\usepackage{amsthm}
\usepackage{booktabs}
\usepackage{algorithm}
\usepackage{algorithmic}
\usepackage[switch]{lineno}

\usepackage{bm}
\usepackage{amssymb}

\usepackage{threeparttable, array, float} 
\usepackage{color, colortbl}
\usepackage[capitalize]{cleveref}
\usepackage{nicefrac} 
\usepackage{xcolor}
\usepackage{graphicx}
\usepackage{float}
\usepackage{newfloat}
\usepackage{subcaption}
\usepackage{mathtools}
\usepackage{indentfirst}
\usepackage{natbib}

\definecolor{dark-red}{rgb}{0.4, 0.15, 0.15}
\definecolor{dark-blue}{rgb}{0.15, 0.15, 0.4}
\definecolor{medium-red}{rgb}{0.5, 0, 0}
\definecolor{medium-blue}{rgb}{0, 0, 0.5}
\definecolor{light-red}{rgb}{0.7, 0, 0}
\definecolor{light-blue}{rgb}{0, 0, 0.7}

\definecolor{red}{HTML}{E51400} 
\definecolor{blue}{HTML}{0050EF} 
\definecolor{green}{HTML}{008A00} 
\definecolor{purple}{HTML}{AA00FF} 
\definecolor{orange}{HTML}{FF7F00}
\definecolor{gray}{HTML}{848482}
\definecolor{Gray}{gray}{0.85}
\definecolor{LightGray}{gray}{0.96}

\newtheorem{theorem}{Theorem}[section]

\newtheorem{assumption}{Assumption}[section]

\newtheorem{lemma}{Lemma}[section]

\title{Efficient Linear Bandits via Cluster-Aware Sketching}
\author{
Hantao Yang \\
  University of Science and Technology of China\\
  \texttt{yanghantao@mail.ustc.edu.cn} \\
   \And
Hong Xie\\
  University of Science and Technology of China\\
  \texttt{xiehong2018@foxmail.com} \\
  \And
Defu Lian\\
  University of Science and Technology of China\\
  \texttt{liandefu@ustc.edu.cn} \\
}

\begin{document}
\maketitle
\begin{abstract}
We study the problem of computational efficiency for linear bandits in high-dimensional settings with a finite arm set. In linear bandits, the increase in the dimension $d$ of the feature vectors leads to growing computational costs of $O(d^2)$ at each round of update. Traditional sketching-based methods such as SOFUL reduce computation via fixed-size matrix sketching, yet run the risk of incurring vacuous linear regret when the spectral tail of the data is heavy and the sketch size is inadequately selected. To guarantee regret convergence and effectively reduce computational costs, we introduce a clustering mechanism and propose the Cluster Sketch Linear Bandit (CS-LB) algorithm. Our method preserves the full covariance information in each cluster to guarantee robust sublinear regret without spectral-tail vulnerabilities, performs cluster switching by assigning a sentinel for each cluster, and reduces per-round update computation to $O(l^2d)$ via a tunable sketch size $l<d$. Experiments on synthetic datasets demonstrate that our method consistently maintains a favorable trade-off between efficiency and regret.

\end{abstract}


\section{Introduction}
The stochastic linear contextual bandit is a fundamental framework for sequential decision-making under uncertainty, where an agent repeatedly selects an action from a finite set based on its associated context vector, and observes a noisy reward that is linear in the unknown parameter. This model has found widespread applications in personalized recommendation \cite{LinUCB}, online advertising \cite{10.1145/2505515.2514700}, and mobile health \cite{Tewari2017FromAT}, among others. Classical algorithms such as LinUCB \cite{ContextualBanditsLinearPayoff,OFUL} and linear Thompson Sampling \cite{Thompsonsamplingbandits} achieve near-optimal regret guarantees, but incur a prohibitive $O(d^2)$ per-round time complexity due to the need to maintain and invert a $O(d\times d)$ covariance matrix, where $d$ is the feature dimension.

As data scales have grown rapidly in recent years, both the feature dimension $d$ and the number of arms $N$ can be extremely large, rendering exact methods computationally infeasible. To reduce this update computation, recent works have turned to matrix sketching techniques, which maintain a low-dimensional approximation of the covariance matrix. SOFUL \cite{SOFUL} applies Frequent Directions \cite{FD} to compress the covariance matrix from dimension $d$ to $l < d$, reducing per-round complexity to $O(ld)$. The CBSCFD \cite{CBSCFD} algorithm improves upon SOFUL by leveraging the accumulated information of its truncated singular values, thereby achieving better regret performance. While these methods offer substantial computational speedups, they suffer from a critical drawback: the global sketching of the entire action-history covariance matrix inevitably introduces approximation errors that accumulate over time. This spectral distortion can inflate the confidence ellipsoid, degrade arm-selection accuracy, and in some cases lead to linear regret or even non-convergence, particularly when the covariance spectrum has a heavy tail. In other words, existing sketching methods sacrifice linear regression accuracy in favor of computational efficiency. However, this trade-off struggles to guarantee the convergence of regret, which may render them ineffective in systems that require stability guarantees. \cite{DBSLinUCB} improved upon SOFUL and proposed DBSLinUCB, which ensures sublinear regret through a multi-scale, adaptively adjusted sketch size. Although DBSLinUCB resolves the non-convergence issue of SOFUL, its theoretical guarantees rely on adaptive sketch-size growth and cannot be extended to the fixed-sketch-size setting. Moreover, its computational efficiency is predicated on low-rank data; on full-rank data, it reverts to $O(d^2)$ complexity, losing the speed advantage of sketching.

To address the demand for regret convergence in computation-constrained linear bandit settings, we propose the Cluster Sketch Linear Bandit (CS‑LB) algorithm. To the best of our knowledge, CS‑LB is the first sketching‑based bandit strategy that simultaneously reduces computational cost and guarantees regret convergence under a fixed sketch size. Our main contributions are summarized as follows:
\begin{itemize}
    \item \textbf{Cluster Sketching Framework.} We propose a clustering‑assisted sketching framework for finite‑action linear bandits. Unlike prior global sketching approaches, our method partitions the action set into multiple clusters and maintains a dedicated sketch for each cluster. This design preserves the information integrity of each cluster’s sketch matrix, which serves as the foundation for our regret convergence guarantee.
   \item \textbf{Sentinel‑Guided Cluster Selection.} We introduce a sentinel mechanism atop the cluster‑sketch structure, where the sentinel of each cluster is defined as the maximum upper confidence bound among its arms. This mechanism enforces global optimism in the spirit of OFUL, aligning the algorithm’s behavior with OFUL and enabling a clean theoretical analysis of regret.
    \item \textbf{Regret Convergence and Computational Efficiency.} We prove that CS‑LB achieves a regret bound of $\widetilde{O}(l\sqrt{N_{\mathcal{A}}^{l}T})$ and a per‑round computational cost of $O(l^2d)$, excluding the additional overhead incurred during the warm-up phase, offering a clear efficiency gain when $l^2 < d$. Notably, unlike previous sketching‑based methods, our regret bound is free of the spectral error term $\Delta_T$—the primary source of linear regret in existing approaches—thus ensuring rigorous regret convergence. Moreover, our theoretical framework remains valid under both a fixed sketch size and full-rank data settings.
    \item \textbf{Empirical Validation.} Experiments on synthetic data show that CS‑LB consistently outperforms SOFUL and CBSCFD in cumulative regret, with the largest improvements observed when the action set exhibits a natural clustering structure. In addition, CS‑LB achieves lower running times than OFUL, confirming its practical computational advantage.
\end{itemize}

\section{Related Work}
\textbf{Linear Bandits}
Contextual bandits generalize the finite-arm setting by allowing the exploitation of side information through linear models, thereby giving rise to the linear bandit framework. The study of regret minimization in linear bandits was first studied by \cite{645528.657779}, and has attracted extensive interest in the development of various algorithms \cite{LinearlyParameterized,ContextualBanditsLinearPayoff}. \cite{944919.944941} provided an early algorithm (LinRel) and theoretical analysis for this problem by introducing confidence-bound-based techniques, improving the regret bound from the previous $\widetilde{O}(T^{3/4})$ to $\widetilde{O}(\sqrt{T})$. Subsequently, \cite{Dani2008StochasticLO} extended the theoretical framework to more general compact action sets. \cite{LinUCB} proposed the LinUCB algorithm and successfully applied it to personalized recommendation. \cite{ContextualBanditsLinearPayoff} conducted further theoretical investigations of linear bandits by maintaining a regularized least‑squares estimate of the unknown parameter and constructing a confidence ellipsoid centered on this estimate. By selecting the action that maximizes the upper confidence bound, their work demonstrated that LinUCB achieves a near‑optimal regret of $\widetilde{O}(d\sqrt{T})$ under relatively mild assumptions. \cite{OFUL} further refined the analysis by developing a tighter confidence set based on self-normalized martingale inequalities, and introduced the classic OFUL strategy. \cite{Thompsonsamplingbandits} adapted Thompson Sampling \cite{Thompson1933ONTL} to linear bandits, and this method was later further refined by \cite{Abeille2016LinearTS}. Linear Thompson sampling randomly draws a parameter from the posterior distribution and selects the action that maximizes the expected reward under this sample. Its performance is comparable to that of OFUL up to logarithmic factors.

\textbf{Sketching Bandits}
Sketching is a classic dimensionality reduction technique \cite{10.1145/3055399.3055431,pmlr-v80-andoni18a,BeyondJohnson} that has been introduced into linear bandits to reduce the computational cost of matrix inversion \cite{Yu_Lyu_King_2017,SOFUL}. \cite{FD} proposed Frequent Directions (FD), a sketching method with strong theoretical guarantees in streaming settings, making it particularly suitable for online environments. \cite{SOFUL} first combined FD with the classic OFUL algorithm in linear contextual bandits, yielding the SOFUL algorithm and reducing per-round update costs. Building on SOFUL, \cite{CBSCFD} developed a variant of FD for linear bandits that achieves a tighter regret bound. To overcome the potential non-convergence issue of traditional sketching methods, \cite{DBSLinUCB} proposed DBSLinUCB, a multi-scale sketching framework that adaptively adjusts the sketch size during learning, thus avoiding the linear regret trap of fixed-size sketches. Unlike previous single-scale approaches, DBSLinUCB dynamically distributes the sketch budget across blocks to guarantee sublinear regret without prior knowledge of the data spectrum. While DBSLinUCB features an adaptive sketch size, our work focuses on the fixed-sketch-size setting.

\section{Problem Setting}
\subsection{Notation and Definition}
In our paper, every d-dimensional vector is represented as a column vector. For matrix $\bm{A}$, $[\bm{A}]_k$ denotes the matrix formed by taking the first $k$ rows of matrix $\bm{A}$. For matrix $\bm{A}\in \mathbb{R}^{d_1\times d_2}$, the computed SVD of $\bm{A}$ is defined as $\bm{A} = \bm{U}\bm{\Sigma}\bm{V}^{\top}$, where $\bm{U}$ is $d_1\times d_2$ matrix, $\bm{\Sigma}$ is $d_2\times d_2$ diagonal matrix, $\bm{V}$ is $d_2\times d_2$ matrix. We use $SVD(\bm{A})$ to denote the singular value decomposition of matrix $\bm{A}$ and obtain the corresponding $\bm{U}$, $\bm{V}^{\top}$ and $\bm{\Sigma}$. Let $\sigma_i(\bm{A})$ be the $i$-th largest singular value of $\bm{A}$. For a positive integer $N$, let $[N]=\{1,...,N\}$. For a set $\mathcal{A}$, $|\mathcal{A}|$ denotes the number of elements contained in the set. For theoretical bounds, the notation $\widetilde{O}(\cdot)$ omits logarithmic factors.

\subsection{Stochastic Linear Bandits}
In the MAB model, the policy needs to select an appropriate arm to pull at each round; for the Stochastic Linear Bandits (SLB), the reward of each arm follows a linear function with unknown parameter $\theta^*\in \mathbb{R}^d$.

For classical linear bandits, the learner is given an arm set $\mathcal{A}=[N]$, which corresponds to known arm vectors $\{\bm{x}^{1},...,\bm{x}^{N}\}\subset\mathbb{R}^{d\times 1}$. In round $t$, the learner chooses an action $\bm{x}_t\in\mathcal{A}$ and receives reward $r_t=\left \langle \bm{x}_t,\bm{\theta}^* \right \rangle+\eta_t$, where $\eta_t$ is conditionally independent of $\bm{x}_t$ given $\bm{x}_{1:t-1},r_{1:t-1}$. For our model, we currently consider LSB in the finite-set setting.

According to \cite{OFUL}, we make the following assumption:
\begin{assumption}
The noise $\eta_t$ is R-sub-Gaussian conditioning on $\bm{x}_{1:t-1},r_{1:t-1}$.
\[
\mathbb{E}[e^{\lambda \eta_t}|\bm{x}_{1:t-1},r_{1:t-1}]\leq exp(R^2\lambda^2/2).
\]
\end{assumption}
\begin{assumption}
$\left \|\bm{\theta}^*  \right \|_2\leq S$, $\left \|\bm{x}  \right \|_2\leq L$ for all action $\bm{x}_t\in \mathcal{A}$, for all $t\in [T]$.
\end{assumption}
The goal of the linear bandits is to collaboratively minimize the cumulative regret defined as
\begin{align}
Regret(T)\nonumber
&:=\sum_{t=1}^{T}\left ( \max_{\bm{x}\in \mathcal{A}}\left \langle\bm{x},\bm{\theta}^*\right \rangle-\left \langle\bm{x}_t,\bm{\theta}^*\right \rangle \right )
\end{align}
where $\bm{x}^*=argmax_{\bm{x}\in \mathcal{A}}\left \langle\bm{x},\bm{\theta}^*\right \rangle$ is the optimal arm.

We use $\bm{X}_t=[\bm{x}_1,...,\bm{x}_t]^{\top}$ to define the matrix of all actions selected up to round $t$.

\subsection{Computation efficiency} 
Although the update overhead of linear bandits can be reduced to $O(d^2)$, such computational cost remains substantial in high-dimensional scenarios, especially in systems emphasizing real-time performance where the per-round computation budget may be strictly constrained. Unlike offline training pipelines that can leverage distributed computing and batching, online bandit algorithms demand immediate responses, and any computation beyond the allocated budget directly translates into service degradation or revenue loss. This practical constraint motivates the development of algorithms that offer a flexible trade-off between computational cost and statistical accuracy, allowing practitioners to adapt to varying system loads and latency requirements.
\subsection{Sketch}
In the study of computational overhead in linear bandits, \cite{SOFUL} first proposed Sketched OFUL (SOFUL), which effectively reduces the computational cost of per-round updates. In traditional linear bandits, the update of the linear regression strategy relies on computing the inverse of the matrix $\bm{V}_t=\lambda\bm{I}+\bm{X}_t^{\top}\bm{X}_t$. This updating always incurs $O(d^2)$ computation overhead. Let $l<d$ be the sketch size. The goal of sketching is to keep an approximate matrix $\bm{S}_T\in \mathbb{R}^{l\times d}$ in place of $\bm{X}_T$ for policy updates, where $\bm{S}_T$ satisfies
\[
\bm{S}_T^{\top}\bm{S}_T \approx \bm{X}_T^{\top}\bm{X}_T.
\]

The classical sketch algorithm Frequent Directions (FD) \cite{FD} truncates the smallest singular value after performing an SVD of the covariance matrix at every update. In each round $t$, FD updates as follows:
\begin{align}
\nonumber &[\bm{U},\bm{\Sigma},\bm{V}^{\top}]{=}SVD\left([\bm{S}_{t-1}^{\top};\bm{x}_t]^{\top}\right),\\
\nonumber &\delta_t{=}\sigma_l^2(\bm{\Sigma}), \bm{S}_{t}{=}\sqrt{\bm{\Sigma}^2-\delta_t\bm{I}}\bm{V}^{\top},
\end{align}
where $\sigma_i(\bm{\Sigma})$ is the $i$-th largest singular value of $\bm{\Sigma}$. 

FD method can reduce the computational cost from $O(d^2)$ to $O(ld)$ at the expense of doubling the space used by the algorithm. The FD method can effectively reduce the computational cost; however, it incurs information loss due to singular value truncation, thereby leading to non-convergence. The sketch matrix serves as an approximate representation of the original covariance matrix; some information is lost during the sketching process (e.g., the truncation of singular values), resulting in a discrepancy between the sketch and the original covariance matrix.

To address the information loss and instability caused by FD, a variant called Spectral Compensation Frequent Directions (SCFD) \cite{CBSCFD} introduces a compensation mechanism that preserves the positive definite monotonicity of the covariance sequence. Unlike FD, which directly discards the truncated spectral mass $\delta_t$, SCFD accumulates this lost information into a scalar $\alpha_t = \alpha_{t-1} + \delta_t$ and explicitly adds it as a diagonal correction $\alpha_t \bm{I}_d$ to the sketched approximation. That is, SCFD maintains
\[
\hat{\bm{V}}_t = \bm{S}_t^{\top}\bm{S}_t + \alpha_t \bm{I}_d\approx \bm{X}_T^{\top}\bm{X}_T+\lambda\bm{I},
\]
which ensures $\hat{\bm{V}}_t \succeq \hat{\bm{V}}_{t-1}$ for all $t$, and achieves better-conditioned approximations. Although SCFD achieves improved regret bounds and stronger robustness, it still retains the same error bound as FD. The non‑convergence issue caused by heavy spectral tails in sketching methods persists.

\section{Algorithm}
In this section, we analyze the limitations of existing approaches for improving computational efficiency in linear bandits, and present our proposed algorithm, Cluster Sketch Linear Bandits (CS-LB). The CS-LB algorithm not only enhances computational efficiency with a fixed sketch size, but also ensures the convergence of the corresponding regret.
\subsection{Limitations of Existing Approaches}
\textbf{OFUL \cite{OFUL}.} As a classic algorithm for linear bandits, OFUL enjoys rigorous theoretical guarantees on regret convergence. However, its per-round update cost is prohibitively large in high-dimensional settings, which makes it difficult to deploy OFUL in computation-constrained environments.

\textbf{SOFUL \cite{SOFUL} / CBSCFD \cite{CBSCFD}.} SOFUL reduces OFUL's computation to $O(dl)$ via fixed-size matrix sketching. Its regret bound, however, depends critically on the spectral error $\Delta_T$:
\[
\mathrm{Regret}_T^{SOFUL} = \tilde{O}\left((1 + \Delta_T)^{3/2} \left(l + d\log(1 + \Delta_T)\right) \sqrt{T}\right).
\]
When the data exhibits heavy spectral tails, $\Delta_T$ grows with $T$, and the regret collapses to linear. Thus SOFUL trades statistical reliability for efficiency—a compromise that fails precisely when robust performance is most needed. The same issue also appears in CBSCFD. Even though CBSCFD reduces the spectral error term $O(\Delta_T^{3/2})$ in SOFUL to $O(\sqrt{\Delta_T})$, it still carries the risk of linear regret \cite{DBSLinUCB}.

\textbf{DSBLinUCB \cite{DBSLinUCB}.} DBSLinUCB employs a multi-scale, adaptively adjusted sketch size to ensure that the global spectral error is bounded by a fixed parameter $\varepsilon$, thereby guaranteeing sublinear regret in all cases. This effectively addresses the long-standing linear regret issue inherent in SOFUL and CBSCFD. However, DBSLinUCB still has several limitations. First, it maintains gradually growing blocks to adaptively increase the sketch size so as to ensure regret convergence, which makes it inapplicable in computation-constrained scenarios where a fixed sketch size is required. Second, DBSLinUCB is built on the assumption that the data is low-rank. When the data is full-rank and heavy-tailed, DBSLinUCB degrades to exact rank-1 updates, in which case its time complexity increases to $O(d^2)$—the same as OFUL—thereby losing the speed advantage of sketching.

\subsection{Cluster Sketch Linear Bandits}
We address the regret non-convergence problem in traditional sketching methods and propose the Cluster Sketch Linear Bandits (CS-LB) algorithm to improve the computational efficiency of updates for linear bandits with a finite arm set, as shown in Algorithm \ref{alg:CS-LB}. Moreover, our method still guarantees regret convergence even in the presence of full-rank data and under a fixed sketch size. CS-LB is primarily composed of the following components:

\textbf{FD Sketching.} As mentioned above, FD is a deterministic sketching algorithm that maintains the $l$-th smallest singular value of the sketch matrix and truncates the remaining singular values. This approach strictly bounds the computational complexity of SVD decomposition at $O(l^2d)$, and by leveraging the acceleration techniques in FD, the average per-round update cost can be further reduced to $O(ld)$, where the sketch size $l < d$. While this method achieves a significant reduction in computational overhead, when the matrix $X_T$ is not low-rank, the FD method discards much of the information necessary for accurately computing linear regression, ultimately leading to non-convergent regret in SOFUL \cite{SOFUL} and CBSCFD \cite{CBSCFD}. Although the FD method may suffer from non-convergence, its simplicity and efficiency lead us to adopt it for reducing the update cost of linear bandits, with further optimizations built on top.  Here we make some modifications to the update procedure in \cite{FD}, and present the resulting FD sketching update procedure in Algorithm \ref{alg:FD}.

\begin{algorithm}[!ht] 
    \caption{FD sketching: $FD(\bm{S}, \bm{x})$} 
	\label{alg:FD}
    \textbf{Input:} $\bm{S}$, $\bm{x}$\\ 
    \textbf{Parameters:} sketch size $l$\\ 
    \textbf{Output:} 
    \begin{algorithmic}[1]
    \STATE compute SVD: $[\bm{U},\bm{\Sigma},\bm{V}^T]=SVD(\text{vstack}[\bm{S}, \bm{x}^{\top}])$\\
	where $\bm{U}\in \mathbb{R}^{(l+1)\times (l+1)},\bm{\Sigma}\in \mathbb{R}^{(l+1)\times d},\bm{V}^T\in \mathbb{R}^{d\times d}$
    \STATE $k=rank(\bm{\Sigma})$
	\STATE $\hat{\bm{\Sigma}}=[\bm{\Sigma}]_{l\times d}$
	\STATE $\bm{S}'=\sqrt{\hat{\bm{\Sigma}}}\bm{V}^T\in \mathbb{R}^{l\times d}$
	\STATE $\bm{H}=([\hat{\bm{\Sigma}}]_{l\times l}+\lambda\bm{I})^{-1}\in \mathbb{R}^{l\times l}$
    \RETURN $\bm{S}', \bm{H}, k$
    \end{algorithmic}
    \end{algorithm}

The vstack function performs row-wise concatenation of the sketch matrix $\bm{S}$ and the vector $\bm{x}$. The FD sketch algorithm considered herein receives an $l\times d$ sketch matrix and a $d\times 1$ update vector $\bm{x}$ as inputs, and outputs the rank $k$ resulting from the concatenation, the updated $l\times d$ sketch matrix $S'$, as well as the matrix $H'$ needed for inverse computation. Let $\bar{\bm{V}}=\lambda\bm{I}+\bm{S'}^{\top}\bm{S'}$. According to \cite{SOFUL}, $\bar{\bm{V}}_t^{-1}=\frac{1}{\lambda}(\bm{I}-\bm{S'}^{\top}\bm{H}\bm{S'})$ can be efficiently computed.

\textbf{Cluster sketching Warm-up.}
We propose a clustering strategy that integrates clustering with matrix sketching, where each cluster maintains an independent sketch matrix to preserve the local covariance structure of arms within the cluster, rather than relying on a global unified sketch. We partition the original arm set into clusters based on the rank obtained from FD sketch updates, ensuring that the rank of the matrix composed of arm vectors within each cluster does not exceed $l$. During this clustering process, an arm is assigned to a cluster if its inclusion does not cause the cluster's rank to exceed $l$; otherwise, we search for another suitable cluster or let the arm form a new cluster on its own.

Algorithm \ref{alg:Warm-up} presents our cluster sketch partitioning scheme. We use $\mathcal{C}^{all}$ to denote the set of all clusters. Each cluster $\mathcal{C}$ is a disjoint subset of the arm set, and each cluster maintains its own sketch matrix $\mathbf{S}_{\mathcal{C}}$. Algorithm \ref{alg:Warm-up} takes an arm set $\mathcal{A}$ as input and returns a partitioned cluster set $\mathcal{C}^{all}$ after $t=N$ rounds.

In the warm-up phase, FD sketching update process itself corresponds to computing the rank within the cluster (i.e. the matrix $\bm{X}_{\mathcal{C},t}$ obtained by concatenating the arms contained in the cluster as row vectors). By leveraging FD sketching, the rank of $\bm{X}_{\mathcal{C},t}$ can be obtained with a computational cost of $O(l^2d)$. Through cluster sketch-based warm-up of the arm set, the rank of the matrix $\bm{X}_{\mathcal{C},N}$ is bounded by the sketch size $l$. Drawing upon Property 3 in \cite{SOFUL}, we ensure that $\bm{S}_{\mathcal{C},N}^{\top}\bm{S}_{\mathcal{C},N} = \bm{X}_{\mathcal{C},N}^{\top}\bm{X}_{\mathcal{C},N}$ holds within each cluster. Consequently, the FD sketching update incurs no information loss inside any cluster, thereby guaranteeing the convergence of regret.

 \begin{algorithm}[!ht] 
    \caption{Cluster Sketching Warm-up: $CSW(\mathcal{A},l)$} 
	\label{alg:Warm-up}
    \textbf{Input:} $\mathcal{A}$, sketch size $l$\\ 
    \textbf{Initialization:} $\mathcal{C}^{all}=\phi$, Find=False, $t=0$\\
    \textbf{Output:} 
    \begin{algorithmic}[1]
    \FOR{$\bm{x}\in \mathcal{A}$}
    \FOR{$\mathcal{C}\in \mathcal{C}^{all}$}
    \STATE $\bm{S}_{\mathcal{C},t+1},\bm{H}_{\mathcal{C},t+1},k_{\mathcal{C},t+1}=FD(\bm{S}_{\mathcal{C},t},\bm{x})$
    \IF{$k_{\mathcal{C},t+1}\leq l$}
    \STATE Find=True, $\mathcal{C}=\mathcal{C}\cup \{\bm{x}_t\}$
    \STATE break
    \ELSE
    \STATE Find=False, $\bm{S}_{\mathcal{C},t+1}=\bm{S}_{\mathcal{C},t}$
    \ENDIF
    \ENDFOR
    \IF{not Find}
    \STATE $\mathcal{C}_{new}=\{\bm{x}\}$
     \STATE $\bm{S}_{\mathcal{C}_{new},t+1}=\text{vstack}[\bm{x}^{\top},\bm{0}_{(l-1)\times d}]\in \mathbb{R}^{l\times d}$.
     \STATE $\mathcal{C}^{all}.\text{append}(\mathcal{C}_{new})$, $\text{len}(\mathcal{C}^{all}){+}{=}1$
    \ENDIF
    \STATE $t{+}{=}1$
    \ENDFOR
    \RETURN $\mathcal{C}^{all}$
    \end{algorithmic}
    \end{algorithm}

\textbf{Confidence Ellipsoid.} Under our cluster sketch partitioning strategy, FD sketching updates within each cluster incur no information loss. Consequently, the linear regression estimates $\hat{\bm{\theta}}_{\mathcal{C},t}$ within each cluster can accurately identify the optimal arm corresponding to that cluster. Following \cite{OFUL} and \cite{SOFUL}, we construct the confidence interval as:
\begin{align}
\label{eq:beta}
\beta_t(\delta)=R\sqrt{l\ln(1+\frac{tL^2}{l\lambda})+2\ln\frac{N}{l\delta}}+S\sqrt{\lambda}.
\end{align}
\begin{lemma}
\label{lem:main1}
With probability at least $1-\delta$, for any $0< t\leq T$ and $\mathcal{C}\in \mathcal{C}^{all}$, we have
\[
\left \|\hat{\bm{\theta}}_{\mathcal{C},t}-\bm{\theta}^*  \right \|_{\bm{\bar{V}}_{\mathcal{C},t}}\leq \beta_{t}(\delta),
\]
where $\bm{\bar{V}}_{\mathcal{C},t}=\lambda\bm{I}+\bm{S}_{\mathcal{C},t}^{\top}\bm{S}_{\mathcal{C},t}$ is the sketched regularized correlation matrix for cluster $\mathcal{C}$ in round $t$.
\end{lemma}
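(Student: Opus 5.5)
The plan is to reduce the statement to the self-normalized martingale bound of \cite{OFUL}, applied cluster by cluster, and then take a union bound over the clusters.

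\textbf{Exact covariance inside each cluster.} Fix a cluster $\mathcal{C}\in\mathcal{C}^{all}$ produced by Algorithm~\ref{alg:Warm-up}, and let $\bm{X}_{\mathcal{C},t}$ stack the arms from $\mathcal{C}$ pulled up to round $t$, with rewards $\bm{r}_{\mathcal{C},t}$. Every row of $\bm{X}_{\mathcal{C},t}$ lies in $W_{\mathcal{C}}=\mathrm{span}\{\bm{x}:\bm{x}\in\mathcal{C}\}$, and the warm-up guarantees $\dim W_{\mathcal{C}}\le l$. When the concatenated matrix has rank at most $l$, the $(l+1)$-th singular value is zero, so the FD update of Algorithm~\ref{alg:FD} drops nothing. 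Inducting on $t$ with Property 3 of \cite{SOFUL} then gives $\bm{S}_{\mathcal{C},t}^{\top}\bm{S}_{\mathcal{C},t}=\bm{X}_{\mathcal{C},t}^{\top}\bm{X}_{\mathcal{C},t}$. Hence $\bar{\bm{V}}_{\mathcal{C},t}=\lambda\bm{I}+\bm{X}_{\mathcal{C},t}^{\top}\bm{X}_{\mathcal{C},t}$ exactly. I take $\hat{\bm\theta}_{\mathcal{C},t}=\bar{\bm V}_{\mathcal{C},t}^{-1}\bm X_{\mathcal{C},t}^\top\bm r_{\mathcal{C},t}$, the ridge estimate on the cluster's own data.

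\textbf{OFUL decomposition.} Following the proof of Theorem 2 in \cite{OFUL}, I write
\[
\hat{\bm\theta}_{\mathcal{C},t}-\bm\theta^*=\bar{\bm V}_{\mathcal{C},t}^{-1}\bm X_{\mathcal{C},t}^\top\bm\eta_{\mathcal{C},t}-\lambda\bar{\bm V}_{\mathcal{C},t}^{-1}\bm\theta^*.
\]
This gives
\[
\|\hat{\bm\theta}_{\mathcal{C},t}-\bm\theta^*\|_{\bar{\bm V}_{\mathcal{C},t}}\le\|\bm X_{\mathcal{C},t}^\top\bm\eta_{\mathcal{C},t}\|_{\bar{\bm V}_{\mathcal{C},t}^{-1}}+\sqrt\lambda S.
\]
The noise subsequence indexed by the rounds in which $\mathcal{C}$ was pulled is still conditionally $R$-sub-Gaussian with respect to the natural filtration, because the choice of cluster is predictable. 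The self-normalized inequality therefore gives, with probability at least $1-\delta'$ and simultaneously for all $t$,
\[
\|\bm X_{\mathcal{C},t}^\top\bm\eta_{\mathcal{C},t}\|^2_{\bar{\bm V}^{-1}_{\mathcal{C},t}}\le 2R^2\ln\frac{\det(\bar{\bm V}_{\mathcal{C},t})^{1/2}}{\lambda^{d/2}\delta'}.
\]

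\textbf{Determinant bound.} This is the step that replaces $d$ by $l$, and I expect it to be the main obstacle. Since $\bm X_{\mathcal{C},t}^\top\bm X_{\mathcal{C},t}$ has rank at most $l$, $\bar{\bm V}_{\mathcal{C},t}$ has at most $l$ eigenvalues different from $\lambda$. These nontrivial eigenvalues sum to at most $l\lambda+tL^2$. By AM--GM,
\[
\frac{\det\bar{\bm V}_{\mathcal{C},t}}{\lambda^{d}}\le\Bigl(1+\frac{tL^2}{l\lambda}\Bigr)^{l}.
\]
Substituting this into the previous display yields the $R\sqrt{l\ln(1+tL^2/(l\lambda))+2\ln(1/\delta')}$ part of \eqref{eq:beta}.

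\textbf{Union bound.} Each cluster uses only its own data, and the estimates are not assumed independent, so I union-bound over clusters with $\delta'=\delta/|\mathcal{C}^{all}|$. The warm-up only guarantees $|\mathcal{C}^{all}|\le N$, since each cluster contains at least one arm. With $\delta'=\delta/N$ the union-bound term is $2\ln(N/\delta)$, which is larger than the stated $2\ln\frac{N}{l\delta}$, so the claim as written does not follow directly from this argument.

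To get the stated constant I would have to show $|\mathcal{C}^{all}|\le N/l$, that is, that clusters are essentially full. The greedy procedure does not guarantee this: a cluster can be closed with fewer than $l$ arms, or even a single arm, whenever no later arm fits its span. So either an additional argument bounding the number of clusters is needed, or the proof yields only the slightly weaker constant $2\ln(N/\delta)$, which changes nothing at the $\widetilde O$ level.
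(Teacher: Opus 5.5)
\textbf{Gap in the final union-bound step.} Your first four steps match the paper's proof and are sound. These are the no-truncation property from SOFUL giving $\bar{\bm V}_{\mathcal C,t}=\bm V_{\mathcal C,t}$, the OFUL self-normalized bound applied per cluster, AM--GM over the at most $l$ non-trivial eigenvalues, and a union bound with $\delta'=\delta/N_{\mathcal A}^l$. Your remark that the cluster choice is predictable is what justifies applying the self-normalized bound to the per-cluster subsequence. Your evaluation of the log-determinant term as $l\ln(1+tL^2/(l\lambda))$ is the correct one; the paper's display has $2l\log$ there, which is a slip. The gap is the last step. You stop at $|\mathcal C^{all}|\le N$ and justify this by saying the greedy warm-up can close a cluster with fewer than $l$ arms. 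That claim is false, and it is the only thing preventing you from reaching the stated constant.

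\textbf{The missing counting argument.} The acceptance test in Algorithm~\ref{alg:Warm-up} is $k_{\mathcal C,t+1}\le l$. This is a test on the rank after appending $\bm x$, not a test of whether $\bm x$ lies in the cluster's span. Appending one vector raises the rank by at most one, so a cluster of rank $r<l$ accepts \emph{every} arm. Hence a new cluster is opened only when every existing cluster already has rank exactly $l$, and so already contains at least $l$ arms. All clusters except the most recently created one are therefore full. This gives $l(|\mathcal C^{all}|-1)+1\le N$, that is, $|\mathcal C^{all}|\le\lceil N/l\rceil$. This is the fact the paper uses in its final step to replace $N_{\mathcal A}^l$ by $N/l$ in the union bound. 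With it, your argument yields the stated $2\ln\frac{N}{l\delta}$ whenever $l$ divides $N$. In general there remains a rounding slack between $\lceil N/l\rceil$ and $N/l$, which the paper's own final step also glosses over. That slack is harmless, and it is not the obstruction you described.
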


\textbf{Sentinel Strategy.} Combining with the clustering partition, we propose CS-LB (Algorithm \ref{alg:CS-LB}) to perform arm selection and update at each round. We assign a sentinel $U(\mathcal{C})$ to each cluster, initialized to positive infinity (or the upper bound on rewards). During subsequent updates, the sentinel $U(\mathcal{C})$ is set to the maximum upper confidence bound of the expected reward among all arms within that cluster. 

At the beginning of each round, the algorithm selects the cluster with the largest sentinel value as the current active cluster $\mathcal{W}_t$, and then performs LinUCB selection within $\mathcal{W}_t$. Specifically, it chooses the current arm $\bm{x}_t$ with the maximum upper confidence bound of the linear estimate to pull. Subsequently, FD sketching is updated to incorporate $\bm{x}_t$ into the sketch matrix $\bm{S}_{\mathcal{W}_t,t}$. Note that after warm-up, the rank within each cluster remains below $l$, and the selected arm belongs to the current cluster. Therefore, the FD sketching update incurs no information loss, and within the current cluster $\mathcal{W}_t$, it effectively behaves as a rank‑$l$ LinUCB update. Moreover, owing to the adoption of the FD sketching update, the inverse matrix $\bar{\bm{V}}_{\mathcal{W}_t,t}^{-1}$ can be obtained via matrix multiplication without the need for separate computation. According to \cite{SOFUL}, the computational cost per round stems from the SVD decomposition, yielding a per-round complexity of $O(l^2d)$. Compared with the $O(d^2)$ cost incurred by computing the inverse matrix in the traditional OFUL algorithm, CS-LB achieves a significant reduction in computational overhead when $l<\sqrt{d}$.

The sentinel strategy allows our method to preserve the OFUL policy at the global level, while enabling the adoption of the LinUCB policy within individual clusters after warm-up. The integration of these two strategies constitutes our CS-LB algorithm. Overall, the CS-LB algorithm maintains the optimism‑in‑the‑face‑of‑uncertainty principle globally and executes a rank‑$l$ LinUCB method inside each cluster, thereby substantially lowering the per‑round computational cost without compromising regret convergence.

\begin{algorithm}[!ht] 
    \caption{Cluster Sketch Linear Bandits (CS-LB)} 
	\label{alg:CS-LB}
    \textbf{Input:} $\mathcal{A}$, sketch size $l$, $\delta$\\ 
    \textbf{Initialization:}  $\mathcal{C}^{all}\longleftarrow CSW(\mathcal{A},l)$\\
    $U(\mathcal{C}){=}+\infty, \bm{S}_{\mathcal{C},1}{=}\bm{0}_{l\times d},\hat{\bm{\theta}}_{\mathcal{C},1}{=}\bm{0}_{1\times d},\bar{\bm{V}}_{\mathcal{C},1}^{-1}{=}\frac{1}{\lambda}\bm{I}_{d\times d}, \forall \mathcal{C}{\in}\mathcal{C}^{all}$\\
    \textbf{Output:} 
    \begin{algorithmic}[1]
    \FOR{round $t=1,...,T$}
    \STATE $\mathcal{W}_t=argmax_{\mathcal{C}\in \mathcal{C}^{all}}U(\mathcal{C})$
    \STATE $\bm{x}_t=argmax_{\bm{x}\in\mathcal{W}_t}\left\{\bm{x}^{\top}\bm{\hat{\theta}}_{\mathcal{W}_{t},t}+\beta_t(\delta)\left\|\bm{x} \right\|_{\bar{\bm{V}}_{\mathcal{W}_{t},t}^{-1}}\right\}$
    \STATE Play $\bm{x}_t$ and receive $y_t$
    \STATE $\bm{S}_{\mathcal{W}_t,t+1},\bm{H}_{\mathcal{W}_t,t+1},k_{\mathcal{W}_t,t+1}=FD(\bm{S}_{\mathcal{W}_t,t},\bm{x}_t)$
    \STATE $\bar{\bm{V}}_{\mathcal{W}_t,t+1}^{-1}=\frac{1}{\lambda}\left (\bm{I}-\bm{S}_{\mathcal{W}_t,t+1}^T\bm{H}_{\mathcal{W}_t,t+1}\bm{S}_{\mathcal{W}_t,t+1} \right )$
    \STATE $\bm{\beta}_{\mathcal{W}_t,t+1}=\bm{\beta}_{\mathcal{W}_t,t}+y_t\bm{x}_t$
    \STATE $\hat{\bm{\theta}}_{\mathcal{W}_t,t+1}=\bar{\bm{V}}_{\mathcal{W}_t,t+1}^{-1}\bm{\beta}_{\mathcal{W}_t,t+1}$
    \STATE $U(\mathcal{W}_t){=}\max_{\bm{x}\in\mathcal{W}_t}\left\{\bm{x}^{\top}\bm{\hat{\theta}}_{\mathcal{W}_{t},t+1}{+}\beta_{t+1}(\delta)\left\|\bm{x} \right\|_{\bar{\bm{V}}_{\mathcal{W}_{t},t+1}^{-1}}\right\}$
    \FOR{$\mathcal{C}\in \mathcal{C}^{all}$ and $\mathcal{C}\ne \mathcal{W}_t$}
     \STATE $\bm{S}_{\mathcal{C},t+1}=\bm{S}_{\mathcal{C},t},\bm{H}_{\mathcal{C},t+1}=\bm{H}_{\mathcal{C},t},k_{\mathcal{C},t+1}=k_{\mathcal{C},t}$.
    \STATE $\bar{\bm{V}}_{\mathcal{C},t+1}^{-1}=\bar{\bm{V}}_{\mathcal{C},t}^{-1}, \bm{\beta}_{\mathcal{C},t+1}=\bm{\beta}_{\mathcal{C},t}, \hat{\bm{\theta}}_{\mathcal{C},t+1}=\hat{\bm{\theta}}_{\mathcal{C},t}$
    \ENDFOR
    \ENDFOR 
    \end{algorithmic}
    \end{algorithm}

\section{Theoretical Results}
\subsection{Regret Analysis}
\begin{theorem} 
\label{the:main}
Given the sketch size $l$, if $\beta_t(\delta)$ is chosen as specified in Equation \ref{eq:beta} and set $\lambda\geq\max\{1,L^2\}$, then with probability $1-\delta$, the expected cumulative regret of Algoritm \ref{alg:CS-LB} is upper bounded by
\begin{align}
    Regret(T)
    &\leq 2\beta_T(\delta)\sqrt{2lN_{\mathcal{A}}^{l}T\ln\left(1+\frac{TL^2}{\lambda l}\right)} \\
    &=\widetilde{O}\left(l\sqrt{N_{\mathcal{A}}^{l}T}\right)
\end{align}
where $N_{\mathcal{A}}^{l}=|\mathcal{C}^{all}|$ is the number of clusters produced by Cluster sketch warm-up on arm set $\mathcal{A}$ with sketch size $l$.
\end{theorem}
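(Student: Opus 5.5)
We follow the standard OFUL optimism argument, run cluster by cluster. Work on the good event $\mathcal{E}$ of Lemma \ref{lem:main1}, which holds with probability at least $1-\delta$. On $\mathcal{E}$, for every cluster $\mathcal{C}$, every round $t$ and every arm $\bm{x}\in\mathcal{C}$,
\[
|\bm{x}^{\top}(\hat{\bm{\theta}}_{\mathcal{C},t}-\bm{\theta}^*)|\leq\beta_t(\delta)\|\bm{x}\|_{\bar{\bm{V}}_{\mathcal{C},t}^{-1}} .
\]
Hence $\mathrm{UCB}_{\mathcal{C},t}(\bm{x})$ is a valid optimistic index for every arm. Since $\beta_t(\delta)$ is nondecreasing in $t$ and inactive clusters keep their statistics frozen, the sentinel $U(\mathcal{C})$ stored at the last visit of $\mathcal{C}$ is at most the index recomputed with the current $\beta_t$. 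It is also at least $\max_{\bm{x}\in\mathcal{C}}\langle\bm{x},\bm{\theta}^*\rangle$: for unvisited clusters it is $+\infty$, and otherwise it is the maximum of valid UCBs at the time of the update.

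\textbf{Step 1 (optimism).} Let $\mathcal{C}^*$ be the cluster containing $\bm{x}^*$. The algorithm picks $\mathcal{W}_t$ with the largest sentinel, so $U(\mathcal{W}_t)\geq U(\mathcal{C}^*)\geq\langle\bm{x}^*,\bm{\theta}^*\rangle$.

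The delicate point is relating $U(\mathcal{W}_t)$ to the index actually maximized at round $t$. The statistics of $\mathcal{W}_t$ have not changed since its last update, so $U(\mathcal{W}_t)$ equals $\max_{\bm{x}\in\mathcal{W}_t}\{\bm{x}^{\top}\hat{\bm{\theta}}_{\mathcal{W}_t,t}+\beta_{s}\|\bm{x}\|_{\bar{\bm{V}}^{-1}_{\mathcal{W}_t,t}}\}$ for some $s\leq t$. This is at most the same maximum with $\beta_t$, which is attained at $\bm{x}_t$. (If $\mathcal{W}_t$ is still at $+\infty$, it is unvisited, so $\bar{\bm{V}}=\lambda\bm{I}$, and the per-round regret is trivially at most $2LS$. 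Those rounds also fit the sum below, since $\|\bm{x}_t\|^2_{\bar{\bm{V}}^{-1}}=\|\bm{x}_t\|^2/\lambda$ is of order one; I would handle them this way.)

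Therefore the instantaneous regret is
\[
r_t\leq\bm{x}_t^{\top}\hat{\bm{\theta}}_{\mathcal{W}_t,t}+\beta_t\|\bm{x}_t\|_{\bar{\bm{V}}^{-1}_{\mathcal{W}_t,t}}-\bm{x}_t^{\top}\bm{\theta}^*\leq2\beta_T(\delta)\|\bm{x}_t\|_{\bar{\bm{V}}^{-1}_{\mathcal{W}_t,t}} .
\]
Also $r_t\leq2LS$, so in the usual way $r_t\leq 2\beta_T\min\{1,\|\bm{x}_t\|_{\bar{\bm{V}}^{-1}}\}$, using $\beta_T\geq S\sqrt\lambda\geq SL$ from $\lambda\geq\max\{1,L^2\}$.

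\textbf{Step 2 (per-cluster elliptical potential).} Let $T_{\mathcal{C}}$ be the set of rounds in which $\mathcal{C}$ is active, with $n_{\mathcal{C}}=|T_{\mathcal{C}}|$. By the warm-up, every cluster has $\mathrm{rank}(\bm{X}_{\mathcal{C}})\leq l$, so FD is lossless (Property 3 of \cite{SOFUL}): $\bar{\bm{V}}_{\mathcal{C},t}=\lambda\bm{I}+\sum_{s\in T_{\mathcal{C}},s<t}\bm{x}_s\bm{x}_s^{\top}$ exactly. The standard elliptical potential lemma (\cite{OFUL}, Lemma 11) then gives
\[
\sum_{t\in T_{\mathcal{C}}}\min\{1,\|\bm{x}_t\|^2_{\bar{\bm{V}}^{-1}}\}\leq2\ln\frac{\det\bar{\bm{V}}_{\mathcal{C},T}}{\det\lambda\bm{I}} .
\]
Since all increments lie in an $l$-dimensional subspace, the determinant ratio involves only $l$ nontrivial eigenvalues. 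By AM-GM it is at most $(1+n_{\mathcal{C}}L^2/(l\lambda))^l$, so the sum is at most $2l\ln(1+TL^2/(\lambda l))$.

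\textbf{Step 3 (combine).} By Cauchy--Schwarz over all $T$ rounds,
\[
\sum_t r_t\leq2\beta_T\sqrt{T\sum_{\mathcal{C}}\sum_{t\in T_{\mathcal{C}}}\min\{1,\|\bm{x}_t\|^2\}}\leq2\beta_T\sqrt{2lN_{\mathcal{A}}^lT\ln(1+TL^2/(\lambda l))},
\]
which is the claimed bound. Since $\beta_T=\widetilde{O}(\sqrt l)$, the total is $\widetilde{O}(l\sqrt{N^l_{\mathcal{A}}T})$.

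The main obstacle is Step 1. It requires that stale sentinels never undercut the optimal cluster's value, and it must be reconciled with the fact that $\beta$ changes with $t$, which is where monotonicity of $\beta_t$ is used. I would also need to double-check that Lemma \ref{lem:main1}'s union bound over clusters (the $\ln(N/(l\delta))$ term) covers the per-cluster self-normalized bound with the $l$-dimensional determinant.
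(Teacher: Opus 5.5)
This is essentially the paper's proof. Both arguments use optimism through the sentinels, lossless per-cluster FD so that $\bar{\bm{V}}_{\mathcal{C},t}=\bm{V}_{\mathcal{C},t}$, and the rank-$l$ AM--GM determinant bound inside each elliptical potential. Both finish with Cauchy--Schwarz: you apply it once globally, while the paper applies it per cluster and then uses $\sum_{\mathcal{C}}\sqrt{T_{\mathcal{C},T}}\le\sqrt{N_{\mathcal{A}}^{l}T}$, which gives the same constant.

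Your treatment of stale sentinels is more careful than the paper's. The paper writes $U(\mathcal{W}_t)$ as equal to the current index of $\bm{x}_t$, and bounds the current index of $\bm{x}$ above by $U(\mathcal{C}^{\bm{x}})$. Both are imprecise once $\beta_t$ has grown since the cluster's last visit, and the second one runs the wrong way. Your two inequalities are exactly what the argument needs: the stored sentinel is a valid upper confidence bound at storage time, and it is at most the current index because the statistics are frozen and $\beta$ is nondecreasing.

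The one step that does not work as written is your treatment of rounds with $U(\mathcal{W}_t)=+\infty$. On such a round $r_t$ can be of order $LS$, for instance when the unvisited cluster contains only short arms while $\bm{x}^*$ is long. Meanwhile $\|\bm{x}_t\|_{\bar{\bm{V}}^{-1}}=\|\bm{x}_t\|_2/\sqrt{\lambda}$ can be arbitrarily small. So $r_t\le 2\beta_T(\delta)\min\{1,\|\bm{x}_t\|_{\bar{\bm{V}}^{-1}}\}$ need not hold, and ``of order one'' is not a justification.

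The fix is to count these rounds separately. There are at most $N_{\mathcal{A}}^{l}$ of them, one first visit per cluster, and each costs at most $2LS$. This gives an additive $2LSN_{\mathcal{A}}^{l}$ term. It does not change the $\widetilde{O}(l\sqrt{N_{\mathcal{A}}^{l}T})$ rate, but it is not literally inside the displayed constant. The paper's proof skips this case entirely, since its equality chain for $U(\mathcal{W}_t)$ fails when the sentinel is $+\infty$. So this gap is inherited from the source, not introduced by you.

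Your union-bound worry is also justified. A new cluster is opened only when every existing cluster already has rank exactly $l$, so $N_{\mathcal{A}}^{l}\le\lceil N/l\rceil$. That can exceed $N/l$, so the $\ln\frac{N}{l\delta}$ term in $\beta_t$ should really be $\ln\frac{N_{\mathcal{A}}^{l}}{\delta}$. This is a harmless correction to Lemma~\ref{lem:main1}.
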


Compared with the results of SOFUL/CBSCFD in the finite arm set setting, our regret bound eliminates the influence of the spectral error term $\Delta_T$, rigorously guaranteeing the convergence of regret for any sketch size $l>1$. This leads to an overall improvement in the algorithm's performance. The detailed proof is provided in the Appendix.

Since each cluster has a rank of $l$, when the arms are processed sequentially according to Algorithm \ref{alg:Warm-up}, each cluster can contain at least $l$ arms. Therefore, we have $N_{\mathcal{A}}^{l}<\left \lceil N/l \right \rceil $. In this case, the regret bound of CS-LB is obtained as $\widetilde{O}(\sqrt{lNT})$, which implies that in the worst case, the regret of CS-LB depends on the data scale $N$. Here, $N_{\mathcal{A}}^{l}$ is a quantity determined by the composition of the dataset $\mathcal{A}$ and the sketch size $l$. If the dataset exhibits strong linear correlation, the number of arms contained in each cluster increases, and consequently the number of cluster $N_{\mathcal{A}}^{l}$ becomes substantially smaller. For datasets with a strong linear structure, by choosing an appropriate $l$, it is possible to achieve $l\sqrt{N_{\mathcal{A}}^{l}}<d$, thereby reducing computational overhead while attaining better regret performance than OFUL. We will demonstrate this in the subsequent experiments.
\subsection{Computation Analysis}
The computational cost of CS-LB mainly consists of two parts: the cost incurred during the warm-up phase (Algorithm \ref{alg:Warm-up}), where the arm set is partitioned into clusters, and the per-round update cost during the subsequent online learning phase (Algorithm \ref{alg:CS-LB}).

Regarding the per-round update cost in the online learning phase, the analysis in the preceding sections has shown that each update occurs within a cluster whose rank does not exceed $l$, and the dominant cost arises from the SVD decomposition. Hence, the per-round update cost is $O(l^2d)$. Since $l$ can be tuned in advance according to the requirements of the environment, the setting $l^2<d$ is readily achievable, and thus CS-LB achieves a significant improvement in computational efficiency.

Regarding the computational cost of the warm-up phase, it mainly arises from the fact that each arm must perform an FD sketch update on each existing cluster to verify whether it can be added to that cluster. In the worst case, this incurs a total computational cost of $O(N\cdot N_{\mathcal{A}}^{l}\cdot l^2d)$, where $N_{\mathcal{A}}^{l}$ denotes the number of clusters. Since both the number of clusters and the rank of each cluster grow gradually during the construction process, the actual computational cost is smaller than this worst-case bound. On the other hand, the additional cost incurred during the warm-up phase is one-time only and does not appear in the subsequent online learning stage. When the horizon $T$ is sufficiently large, this one-time warm-up overhead becomes merely a constant term in the total computational cost. Thus, in terms of total computation, CS-LB achieves a significant efficiency gain.

\section{Numerical experiments}
In this section, we evaluate the performance of CS-LB and compare it with
UCB1\cite{UCB1}, OFUL\cite{OFUL}, SOFUL\cite{SOFUL} and CBSCFD\cite{CBSCFD}. We generated different synthetic datasets for different scenarios to facilitate comparison.
\subsection{Baseline}
\begin{itemize}
\item \textbf{UCB1.} Although the classic UCB1 method does not involve matrix computations and thus incurs low computational cost, UCB1 ignores the linear structure of the dataset. When the number of arms $N$ is large, its regret grows linearly with $N$. In contrast, our method exploits the correlations among arms via the linear structure within each cluster, and therefore outperforms traditional UCB1 methods on datasets with a strong linear structure.

\item \textbf{OFUL.} As a classic algorithm for solving linear bandits, OFUL enjoys a tight regret guarantee, but inevitably incurs a computational cost of $O(d^2)$ per round. Our method reduces the per-round computational cost to $O(l^2d)$ (excluding cluster partitioning), where $l$ can be adjusted as needed.

\item \textbf{SOFUL.} SOFUL is the first method to adopt a sketching strategy to reduce computational costs in linear bandits, with both low time and space overhead. However, it faces the risk of linear regret when the sketch size is set inappropriately. Our method retains full matrix information within each cluster without truncating singular values, thereby ensuring convergence.

\item \textbf{CBSCFD.} The CBSCFD algorithm improves upon SOFUL and achieves a tighter regret bound theoretically; however, it still fails to address the issue of linear regret arising from a heavy spectral tail.

\end{itemize} 

\subsection{Experimental Setup}
We set the additive random noise of the reward for each arm to follow a truncated Gaussian distribution with variance $R=1$, and the truncation interval is $[\mu_{\bm{x}}-3R,\mu_{\bm{x}}+3R]$, where $\mu_{\bm{x}}$ denotes the expected reward corresponding to arm $\bm{x}\in\mathcal{A}$. Each component of every arm vector is drawn independently from a uniform distribution over $(0,1)$. Each vector is then multiplied by a random integer coefficient and subsequently normalized to ensure that $\left \| \bm{x} \right \|\leq 1$. The ground-truth parameter $\bm{\theta^*}$ is generated in the same manner. Accordingly, we set $L=1$, $S=1$ and $\lambda=1$.

\textbf{Linear Correlation.} To demonstrate the advantages of the linear model, we incorporate linear correlation into the generation of our synthetic dataset. In generating the data, we introduce a linear correlation coefficient $m^{l}$, representing the number of linearly correlated vectors generated in succession. The arm set generation procedure incorporating $m^{l}$ is as follows: Assuming that $N$ arms are to be generated, these arms are divided into $N/m^{l}$ groups for generation. Each group consists of $m^{l}$ vectors that share the same direction but differ in their coefficients. The resulting dataset exhibits the following characteristics:
\begin{align}
\nonumber
&\alpha_{1,1}\bm{x}^{1},...,\alpha_{1,m^{l}}\bm{x}^{1},\\ \nonumber
&\alpha_{2,1}\bm{x}^{2},...,\alpha_{2,m^{l}}\bm{x}^{2},\\ \nonumber
&...\\ \nonumber
&\alpha_{N/m^{l},1}\bm{x}^{N/m^{l}},...,\alpha_{N/m^{l},m^{l}}\bm{x}^{N/m^{l}},
\end{align}
where $\alpha_{1,1}>0$ is a randomly generated real-valued coefficient. Finally, the dataset generated above is normalized by $\bm{x}/\left \| \bm{x} \right \|$.

\subsection{Regret Convergence Verification}
We set up two separate synthetic datasets as follows: (1)$N=60,d=50,l=5$, (2)$N=100,d=50,l=10$, where $N$ is the number of arms, $d$ is the dimension, and $l$ is the sketch size.

We compare the regret and running time (seconds) of our method against SOFUL, CBSCFD, and OFUL on these two simulated datasets. All reported results are averaged over five independent runs. Averaged experimental results are shown in Figures \ref{fig:regret1} and \ref{fig:time1}.

\begin{figure}[!ht]
\centering
\begin{subfigure}{.23\textwidth}
  \centering
  \includegraphics[width=\linewidth,height=1\textwidth]{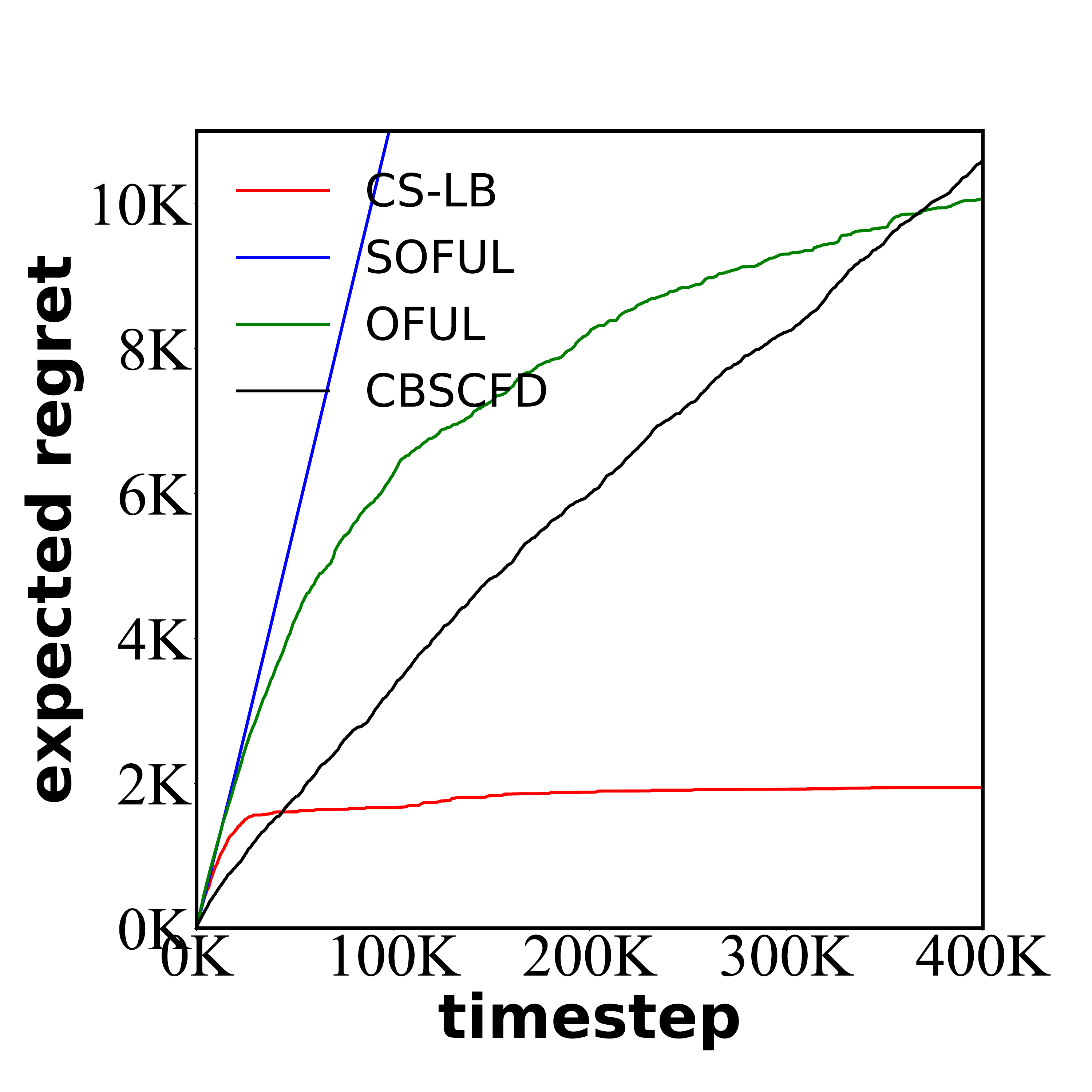}  
  \caption{$N=60,d=50,l=5$}
\end{subfigure}
\begin{subfigure}{.23\textwidth}
 \centering
  \includegraphics[width=\linewidth,height=1\textwidth]{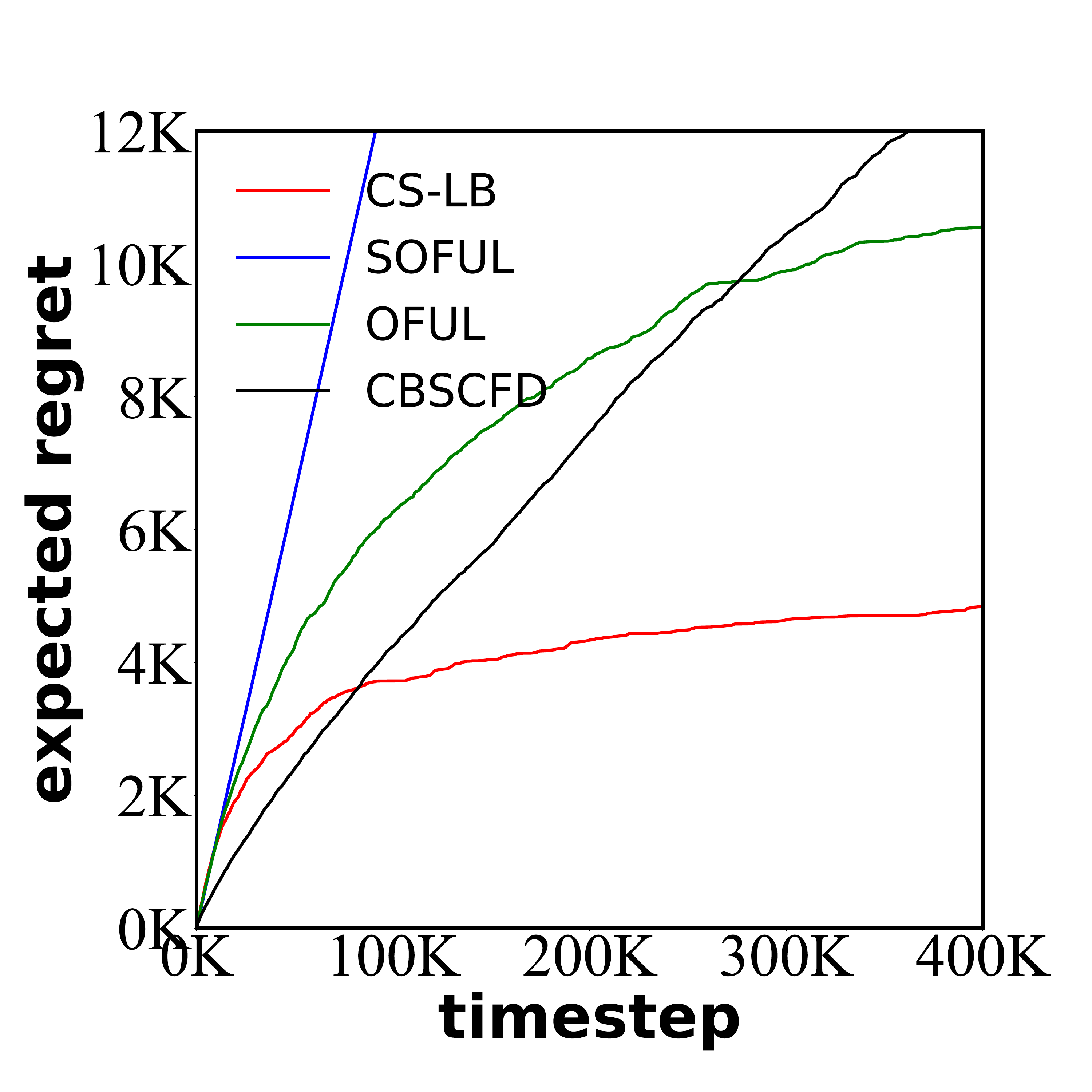}  
  \caption{$N=100,d=50,l=10$}
\end{subfigure}
\caption{Average Regret}
\label{fig:regret1}
\end{figure}

\begin{figure}[!ht]
\centering
\begin{subfigure}{.23\textwidth}
  \centering
  \includegraphics[width=\linewidth,height=1\textwidth]{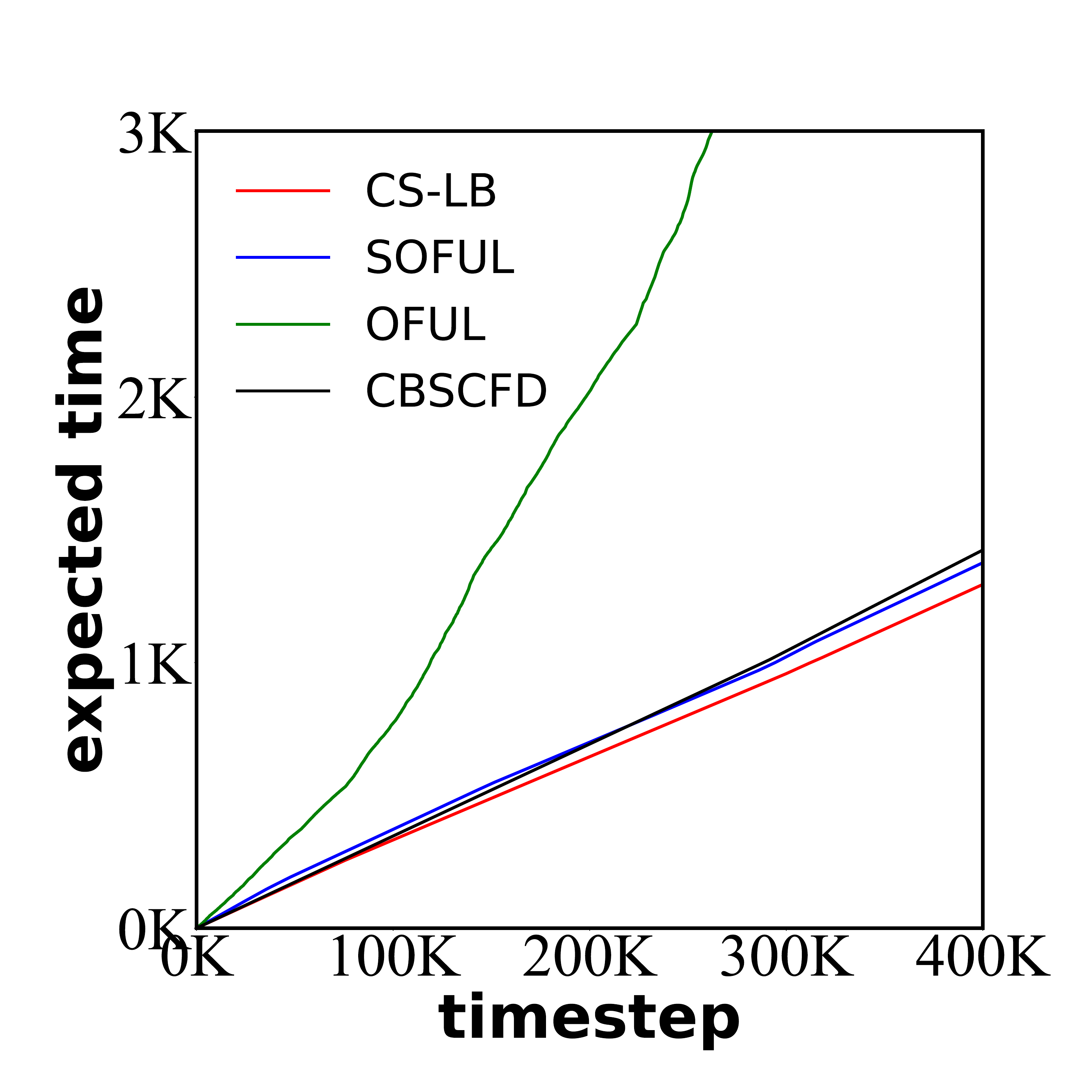}  
  \caption{$N=60,d=50,l=5$}
  \end{subfigure}
\begin{subfigure}{.23\textwidth}
  \centering
\includegraphics[width=\linewidth,height=1\textwidth]{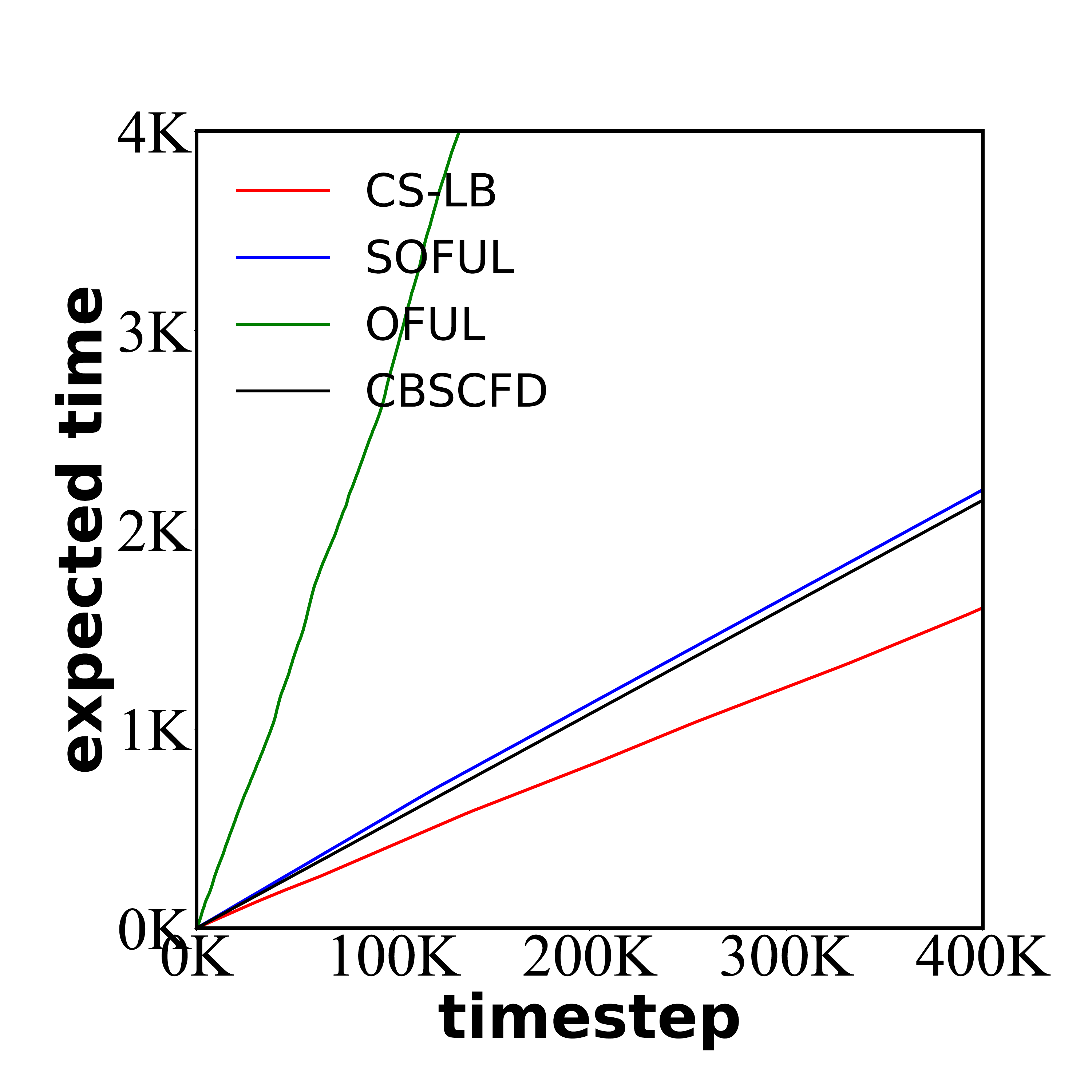}  
  \caption{$N=100,d=50,l=10$}
\end{subfigure}
\caption{Average Running Time}
\label{fig:time1}
\end{figure}

For Dataset (1), according to Theorem \ref{the:main}, the maximum number of clusters is $N/l=12$, so we have $l\sqrt{N_{\mathcal{A}}^{l}}<5\times \sqrt{12}<d=50$. Therefore, CS‑LB is expected to achieve the smallest regret among all compared algorithms. Meanwhile, SOFUL and CBSCFD suffer from non‑convergence due to the chosen sketch size. The same reasoning applies to Dataset (2). The experimental results are consistent with our theoretical analysis.

We also observe that CS‑LB achieves slightly lower running time than SOFUL and CBSCFD. We attribute this to the fact that CS‑LB restricts the search for the arm to be played to the current cluster, whose size is significantly smaller than the entire arm set of size $N$. Consequently, on these datasets, CS‑LB not only reduces computational cost but also lowers the search overhead.

\subsection{Comparing with UCB1}
For the fixed-arm-set setting, a straightforward approach is the classic UCB1 algorithm, which incurs no matrix operations and thus achieves the fastest running time. However, UCB1 fails to exploit the internal structure of the linear model, causing its regret to deteriorate as the number of arms grows. To demonstrate that the linear bandit problem is not trivially solvable by a simple UCB1 strategy, we generate the following two synthetic datasets according to the experimental setup for linear correlation described above: (3) $N=100,d=50,l=10,m^l=2$, (4) $N=200,d=50,l=5,m^l=10$.

Based on the definition of linear correlation coefficient $m^l$ above, Dataset (4) has higher linear correlation and a smaller rank than Dataset (3). All reported results are averaged over five independent runs. Averaged experimental results are shown in Figures \ref{fig:regret2} and \ref{fig:time2}.

\begin{figure}[!ht]
\centering
\begin{subfigure}{.23\textwidth}
  \centering
\includegraphics[width=\linewidth,height=1\textwidth]{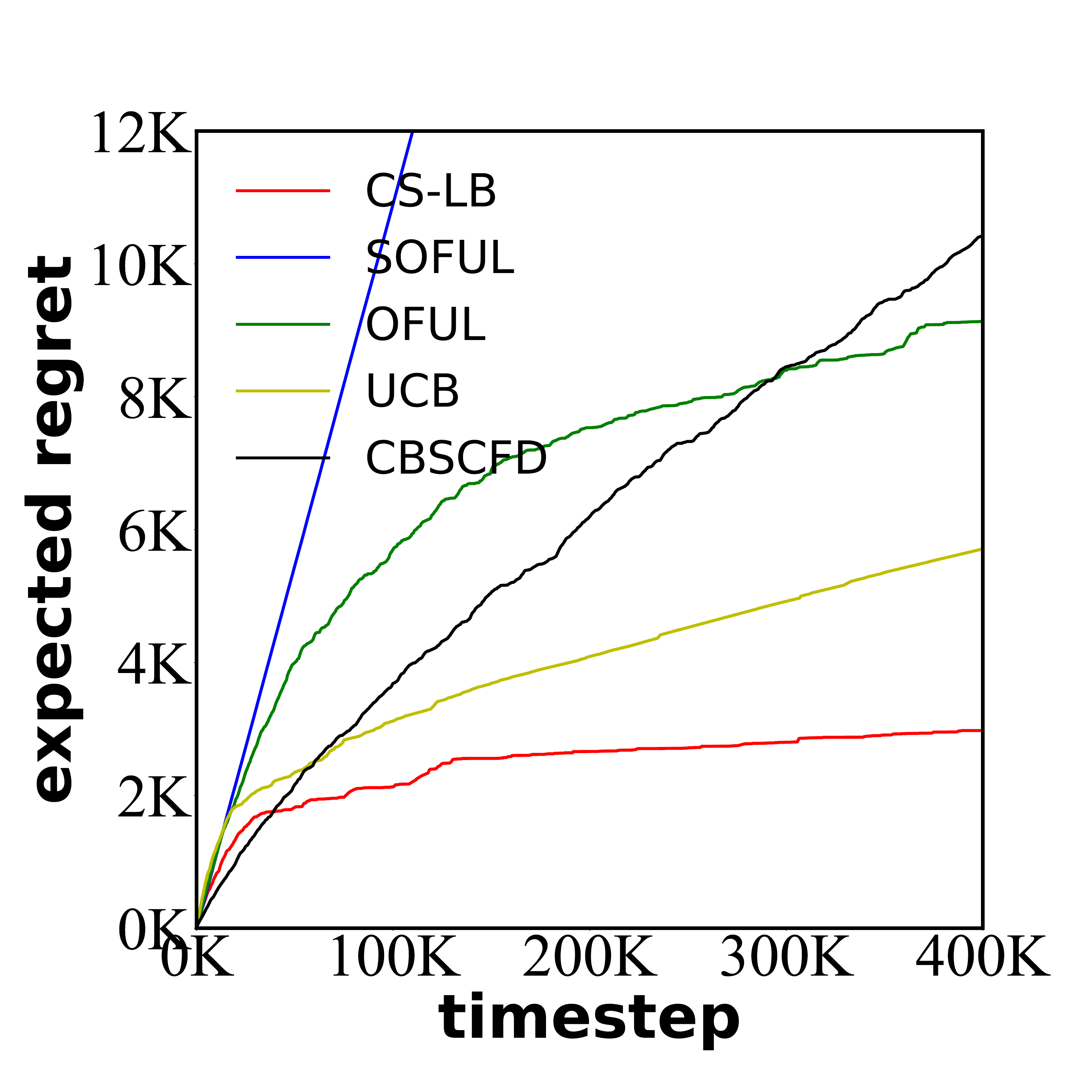}  
  \caption{$m^l=2$}
\end{subfigure}
\begin{subfigure}{.23\textwidth}
  \centering
\includegraphics[width=\linewidth,height=1\textwidth]{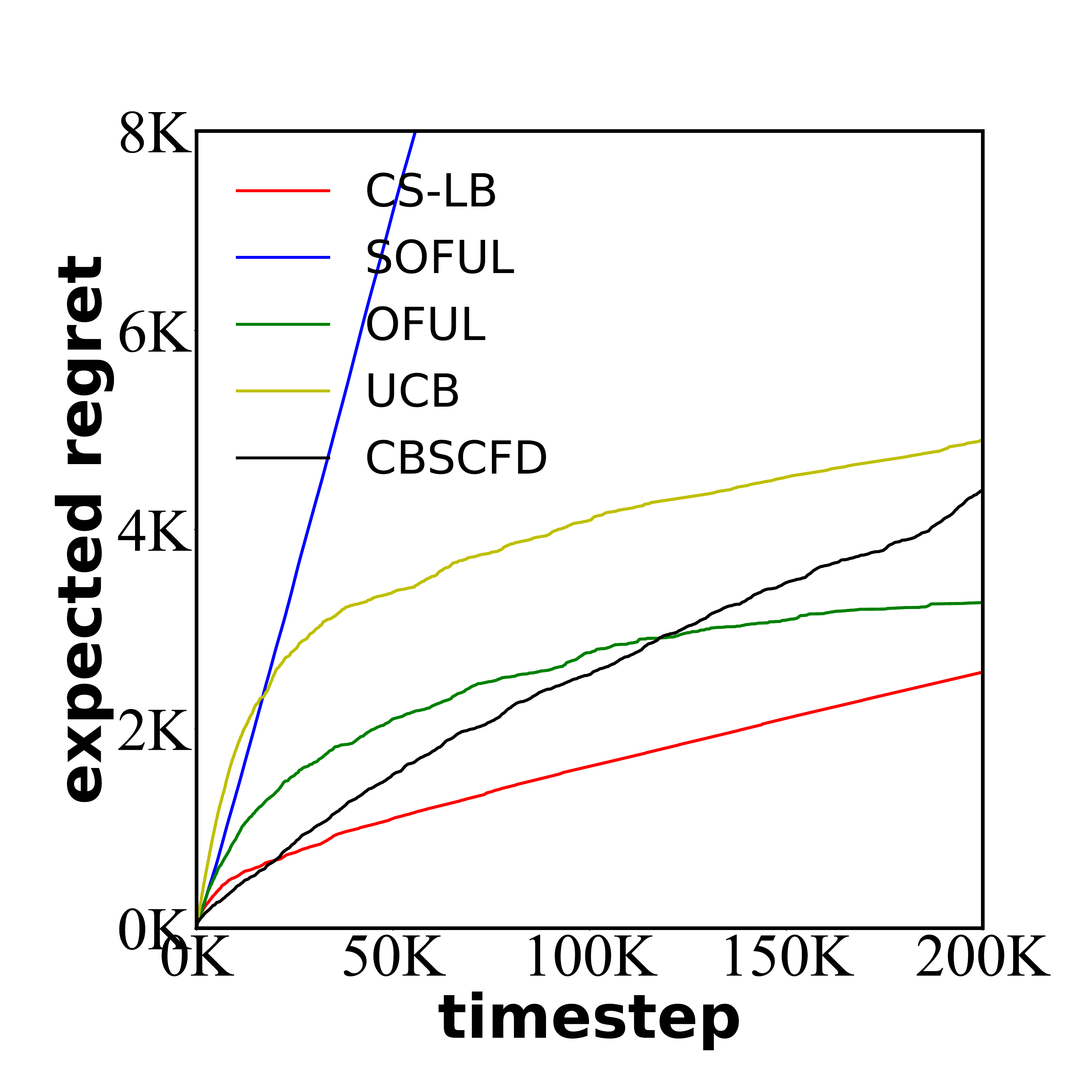}  
  \caption{$m^l=10$}
\end{subfigure}
\caption{Average Regret}
\label{fig:regret2}
\end{figure}

\begin{figure}[!ht]
\centering
\begin{subfigure}{.23\textwidth}
  \centering
\includegraphics[width=\linewidth,height=1\textwidth]{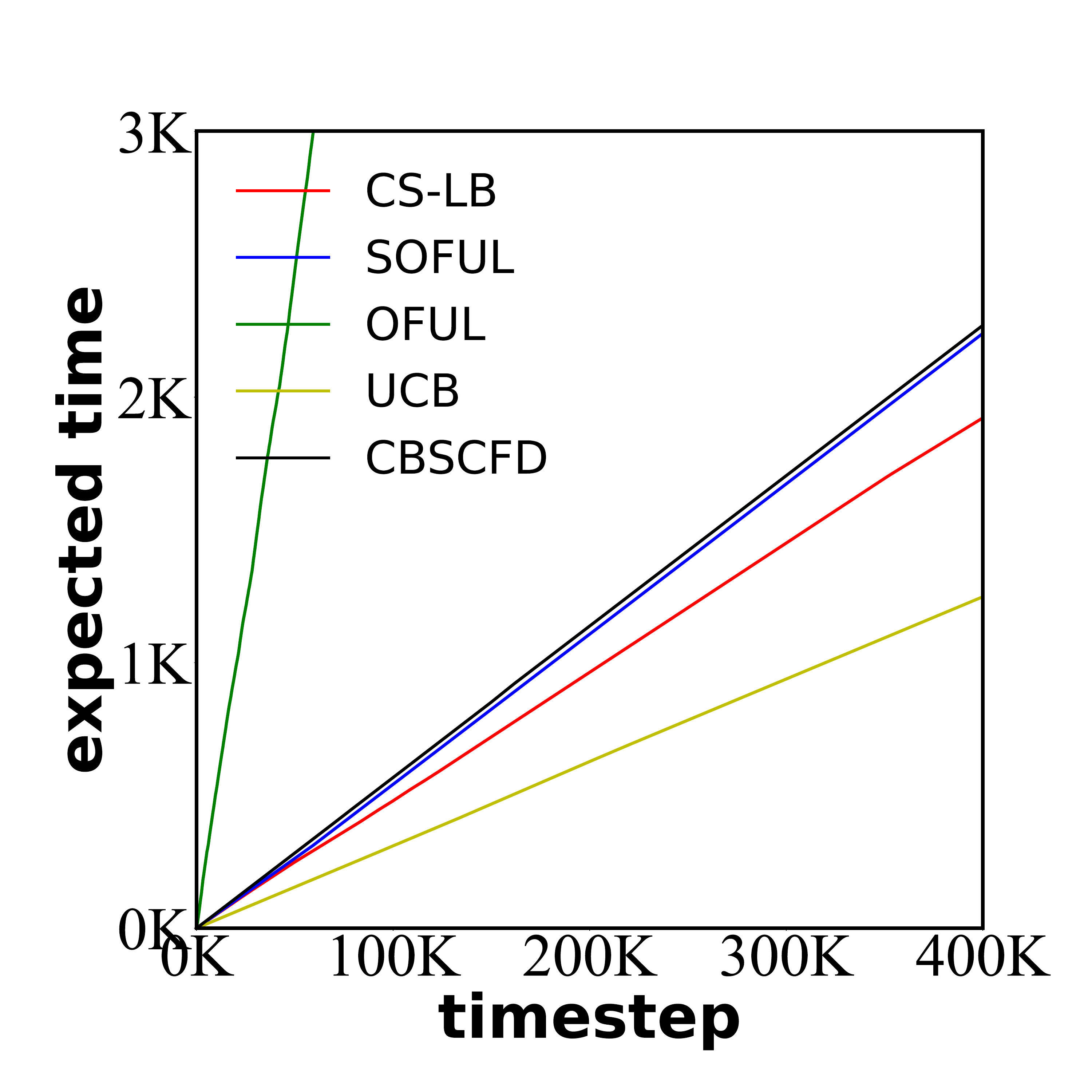}  
  \caption{$m^l=2$}
  \end{subfigure}
\begin{subfigure}{.23\textwidth}
  \centering
\includegraphics[width=\linewidth,height=1\textwidth]{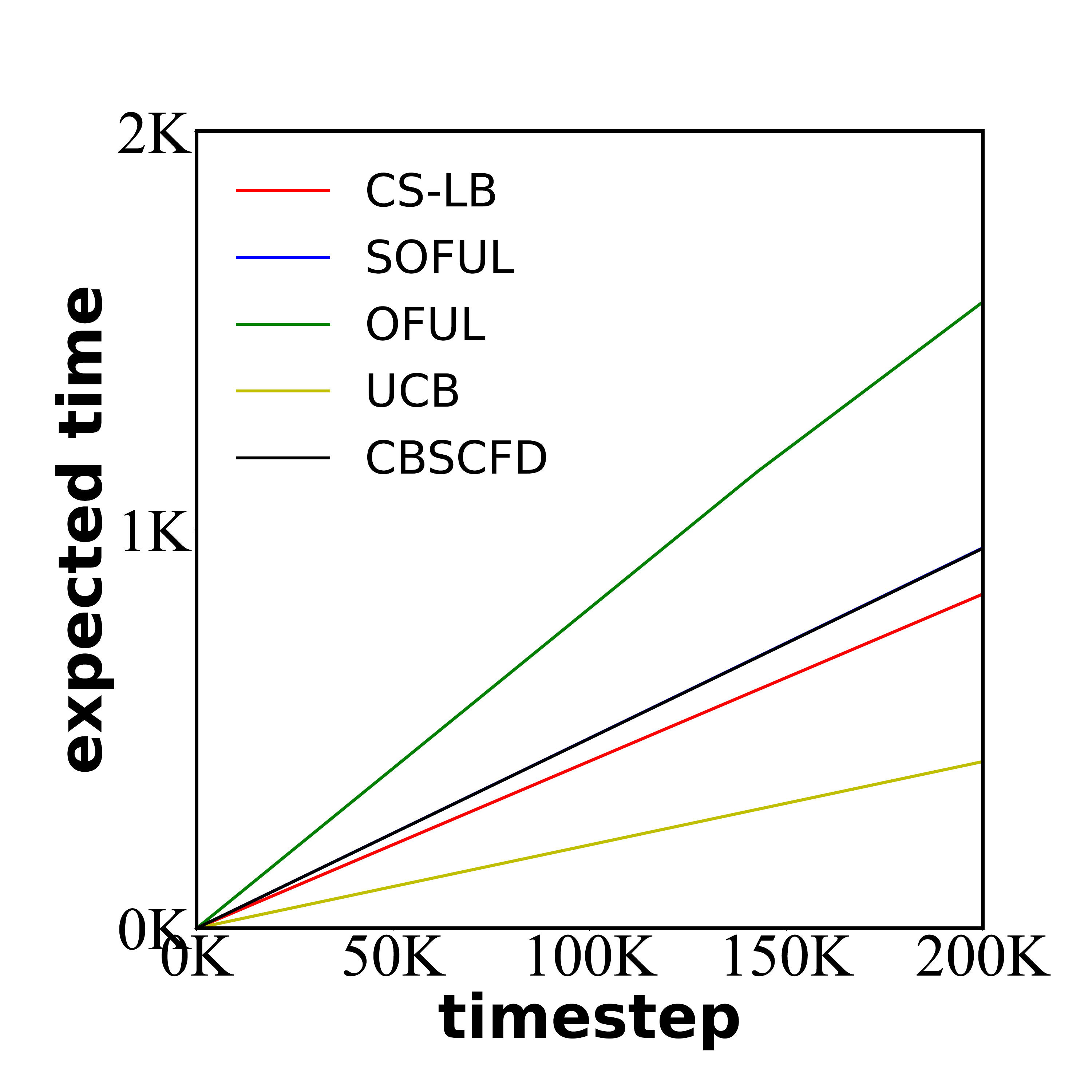}  
  \caption{$m^l=10$}
\end{subfigure}
\caption{Average Running Time}
\label{fig:time2}
\end{figure}

Experimental results demonstrate that, while UCB1 is computationally cheaper, OFUL and CS-LB achieve superior convergence rates as the arm set expands, by effectively exploiting the linear structure. Hence, sketching strategies retain considerable practical value for fixed-arm-set linear bandits, particularly when the arm set is large.

\section{Conclusion}
This paper addresses the non-convergence and regret inflation caused by cumulative spectral errors in sketching-based linear bandits. We propose Cluster Sketch Linear Bandits (CS‑LB), the first sketching strategy that achieves convergence under a fixed sketch size, via clustering that partitions the action set, maintains lossless per‑cluster sketches, and uses a sentinel to select the optimistic cluster each round. We prove that our method achieves sublinear regret with guaranteed convergence, and reduce per‑round computation from $O(d^2)$ to $O(ld)$. Experimental evaluations on synthetic data show that CS‑LB converges reliably, lowers computational overhead, and achieves superior regret performance.

We acknowledge that our current method still has limitations. Since the warm-up phase of the algorithm relies on reading the entire dataset, CS‑LB is not yet applicable to more general linear contextual bandits with a varying arm set. We plan to extend our approach in future work.

\newpage
\bibliographystyle{unsrt}  
\bibliography{references}

@inproceedings{OFUL,
  title={Improved Algorithms for Linear Stochastic Bandits},
  author={Yasin Abbasi-Yadkori and D{\'a}vid P{\'a}l and Csaba Szepesvari},
  booktitle={Neural Information Processing Systems},
  year={2011}
}

@article{UCB1,
author = {Auer, Peter and Cesa-Bianchi, Nicol\`{o} and Fischer, Paul},
title = {Finite-time Analysis of the Multiarmed Bandit Problem},
year = {2002},
issue_date = {May-June 2002},
publisher = {Kluwer Academic Publishers},
address = {USA},
volume = {47},
number = {2–3},
issn = {0885-6125},
url = {https://doi.org/10.1023/A:1013689704352},
doi = {10.1023/A:1013689704352},
journal = {Mach. Learn.},
month = may,
pages = {235–256},
numpages = {22}
}

@InProceedings{SOFUL,
  title = 	 {Efficient Linear Bandits through Matrix Sketching},
  author =       {Kuzborskij, Ilja and Cella, Leonardo and Cesa-Bianchi, Nicol\`{o}},
  booktitle = 	 {Proceedings of the Twenty-Second International Conference on Artificial Intelligence and Statistics},
  pages = 	 {177--185},
  year = 	 {2019},
  editor = 	 {Chaudhuri, Kamalika and Sugiyama, Masashi},
  volume = 	 {89},
  series = 	 {Proceedings of Machine Learning Research},
  month = 	 {16--18 Apr},
  publisher =    {PMLR},
  url = 	 {https://proceedings.mlr.press/v89/kuzborskij19a.html},
}

@inproceedings{CBSCFD,
author = {Chen, Cheng and Luo, Luo and Zhang, Weinan and Yu, Yong and Lian, Yijiang},
title = {Efficient and robust high-dimensional linear contextual bandits},
year = {2021},
isbn = {9780999241165},
articleno = {588},
numpages = {7},
location = {Yokohama, Yokohama, Japan},
series = {IJCAI'20}
}

@inproceedings{645528.657779,
author = {Abe, Naoki and Long, Philip M.},
title = {Associative Reinforcement Learning using Linear Probabilistic Concepts},
year = {1999},
isbn = {1558606122},
publisher = {Morgan Kaufmann Publishers Inc.},
address = {San Francisco, CA, USA},
booktitle = {Proceedings of the Sixteenth International Conference on Machine Learning},
pages = {3–11},
numpages = {9},
series = {ICML '99}
}

@article{944919.944941,
author = {Auer, Peter},
title = {Using confidence bounds for exploitation-exploration trade-offs},
year = {2003},
issue_date = {3/1/2003},
publisher = {JMLR.org},
volume = {3},
number = {null},
issn = {1532-4435},
journal = {J. Mach. Learn. Res.},
month = mar,
pages = {397–422},
numpages = {26}
}

@inproceedings{Dani2008StochasticLO,
  title={Stochastic Linear Optimization under Bandit Feedback},
  author={Varsha Dani and Thomas P. Hayes and Sham M. Kakade},
  booktitle={Annual Conference Computational Learning Theory},
  year={2008},
  url={https://api.semanticscholar.org/CorpusID:9134969}
}

@article{LinearlyParameterized,
author = {Rusmevichientong, Paat and Tsitsiklis, John N.},
title = {Linearly Parameterized Bandits},
year = {2010},
issue_date = {May 2010},
publisher = {INFORMS},
address = {Linthicum, MD, USA},
volume = {35},
number = {2},
issn = {0364-765X},
url = {https://doi.org/10.1287/moor.1100.0446},
doi = {10.1287/moor.1100.0446},
journal = {Math. Oper. Res.},
month = may,
pages = {395–411},
numpages = {17}
}

@inproceedings{ContextualBanditsLinearPayoff,
  title={Contextual Bandits with Linear Payoff Functions},
  author={Wei Chu and Lihong Li and L. Reyzin and Robert E. Schapire},
  booktitle={International Conference on Artificial Intelligence and Statistics},
  year={2011},
  url={https://api.semanticscholar.org/CorpusID:1452971}
}

@inproceedings{LinUCB,
author = {Li, Lihong and Chu, Wei and Langford, John and Schapire, Robert E.},
title = {A contextual-bandit approach to personalized news article recommendation},
year = {2010},
isbn = {9781605587998},
publisher = {Association for Computing Machinery},
address = {New York, NY, USA},
url = {https://doi.org/10.1145/1772690.1772758},
doi = {10.1145/1772690.1772758},
booktitle = {Proceedings of the 19th International Conference on World Wide Web},
pages = {661–670},
numpages = {10},
location = {Raleigh, North Carolina, USA},
series = {WWW '10}
}

@inproceedings{Thompsonsamplingbandits,
author = {Agrawal, Shipra and Goyal, Navin},
title = {Thompson sampling for contextual bandits with linear payoffs},
year = {2013},
publisher = {JMLR.org},
booktitle = {Proceedings of the 30th International Conference on International Conference on Machine Learning - Volume 28},
pages = {III–1220–III–1228},
location = {Atlanta, GA, USA},
series = {ICML'13}
}

@article{Thompson1933ONTL,
  title={ON THE LIKELIHOOD THAT ONE UNKNOWN PROBABILITY EXCEEDS ANOTHER IN VIEW OF THE EVIDENCE OF TWO SAMPLES},
  author={William R. Thompson},
  journal={Biometrika},
  year={1933},
  volume={25},
  pages={285-294},
  url={https://api.semanticscholar.org/CorpusID:120462794}
}

@inproceedings{Abeille2016LinearTS,
  title={Linear Thompson Sampling Revisited},
  author={Marc Abeille and Alessandro Lazaric},
  booktitle={International Conference on Artificial Intelligence and Statistics},
  year={2016},
  url={https://api.semanticscholar.org/CorpusID:8385089}
}

@inproceedings{10.1145/3055399.3055431,
author = {Song, Zhao and Woodruff, David P. and Zhong, Peilin},
title = {Low rank approximation with entrywise l1-norm error},
year = {2017},
isbn = {9781450345286},
publisher = {Association for Computing Machinery},
address = {New York, NY, USA},
url = {https://doi.org/10.1145/3055399.3055431},
doi = {10.1145/3055399.3055431},
booktitle = {Proceedings of the 49th Annual ACM SIGACT Symposium on Theory of Computing},
pages = {688–701},
numpages = {14},
location = {Montreal, Canada},
series = {STOC 2017}
}

@InProceedings{pmlr-v80-andoni18a,
  title = 	 {Subspace Embedding and Linear Regression with Orlicz Norm},
  author =       {Andoni, Alexandr and Lin, Chengyu and Sheng, Ying and Zhong, Peilin and Zhong, Ruiqi},
  booktitle = 	 {Proceedings of the 35th International Conference on Machine Learning},
  pages = 	 {224--233},
  year = 	 {2018},
  editor = 	 {Dy, Jennifer and Krause, Andreas},
  volume = 	 {80},
  series = 	 {Proceedings of Machine Learning Research},
  month = 	 {10--15 Jul},
  publisher =    {PMLR},
  url = 	 {https://proceedings.mlr.press/v80/andoni18a.html},
}

@misc{BeyondJohnson,
      title={Beyond Johnson-Lindenstrauss: Uniform Bounds for Sketched Bilinear Forms}, 
      author={Rohan Deb and Qiaobo Li and Mayank Shrivastava and Arindam Banerjee},
      year={2025},
      eprint={2509.21847},
      archivePrefix={arXiv},
      primaryClass={cs.LG},
      url={https://arxiv.org/abs/2509.21847}, 
}

@article{Yu_Lyu_King_2017, 
title={CBRAP: Contextual Bandits with RAndom Projection}, 
volume={31}, url={https://ojs.aaai.org/index.php/AAAI/article/view/10888}, 
DOI={10.1609/aaai.v31i1.10888}, 
number={1}, 
journal={Proceedings of the AAAI Conference on Artificial Intelligence}, 
author={Yu, Xiaotian and Lyu, Michael R. and King, Irwin}, 
year={2017}, 
month={Feb.} 
}

@inproceedings{DBSLinUCB,
  title={Revisiting Matrix Sketching in Linear Bandits: Achieving Sublinear Regret via Dyadic Block Sketching},
  author={Dongxie Wen and Hanyan Yin and Xiao Zhang and Peng Zhao and Lijun Zhang and Zhewei Wei},
  year={2024},
  url={https://api.semanticscholar.org/CorpusID:273346688}
}

@article{FD,
author = {Ghashami, Mina and Liberty, Edo and Phillips, Jeff M. and Woodruff, David P.},
title = {Frequent Directions: Simple and Deterministic Matrix Sketching},
year = {2016},
issue_date = {2016},
publisher = {Society for Industrial and Applied Mathematics},
address = {USA},
volume = {45},
number = {5},
issn = {0097-5397},
url = {https://doi.org/10.1137/15M1009718},
doi = {10.1137/15M1009718},
journal = {SIAM J. Comput.},
month = jan,
pages = {1762–1792},
numpages = {31}
}

@inproceedings{10.1145/2505515.2514700,
author = {Tang, Liang and Rosales, Romer and Singh, Ajit and Agarwal, Deepak},
title = {Automatic ad format selection via contextual bandits},
year = {2013},
isbn = {9781450322638},
publisher = {Association for Computing Machinery},
address = {New York, NY, USA},
url = {https://doi.org/10.1145/2505515.2514700},
doi = {10.1145/2505515.2514700},
booktitle = {Proceedings of the 22nd ACM International Conference on Information \& Knowledge Management},
pages = {1587–1594},
numpages = {8},
location = {San Francisco, California, USA},
series = {CIKM '13}
}

@inproceedings{Tewari2017FromAT,
  title={From Ads to Interventions: Contextual Bandits in Mobile Health},
  author={Ambuj Tewari and Susan A. Murphy},
  booktitle={Mobile Health - Sensors, Analytic Methods, and Applications},
  year={2017},
  url={https://api.semanticscholar.org/CorpusID:18778220}
}


\newpage
\appendix
\section{Supplementary Proof}
\subsection{Notation in Proof}
To facilitate a formal proof, we provide the following definitions.

Let $\mathcal{C}^{all}\longleftarrow CSW(\mathcal{A},l)$ and $\bm{x}_t$ is arm selected in round $t$. We use $\mathcal{C}^{\bm{x}}\in \mathcal{C}^{all}$ to denote the cluster that $\bm{x}$ belongs to after running Algorithm \ref{alg:Warm-up}.

When running CS-LB, for each cluster $\mathcal{C}\in \mathcal{C}^{all}$, we define:
\begin{align}
&\bm{X}_{\mathcal{C},t}=[\bm{x}_1,...,\bm{x}_t]_{x_i\in \mathcal{C},1\leq i\leq t}^{\top},\\
&\bm{V}_{\mathcal{C},t}=\lambda\bm{I}+\sum_{s=1,\bm{x}_s\in \mathcal{C}}^{t}\bm{x}_s\bm{x}_s^{\top},\\
&\bar{\bm{V}}_{\mathcal{C},t}=\lambda\bm{I}+\bm{S}_{\mathcal{C},t}^{\top}\bm{S}_{\mathcal{C},t},\\
&\bm{\beta}_{\mathcal{C},t}=\sum_{s=1,\bm{x}_s\in \mathcal{C}}^{t}y_{s}\bm{x}_s,\\
&\bm{\hat{\theta}}_{\mathcal{C},t}=\bar{\bm{V}}_{\mathcal{C},t}^{-1}\bm{\beta}_{\mathcal{C},t}.
\end{align}
We use $T_{\mathcal{C},t}$ to denote the number of times each cluster has been selected up to round $t$, i.e.
\begin{align}
    T_{\mathcal{C},t}=\sum_{s=1}^{t}\mathbb{I}\{\bm{x}_s\in \mathcal{C}\}=\sum_{s=1,\bm{x}_s\in \mathcal{C}}^{t}1.
\end{align}
It is obvious that $\sum_{\mathcal{C}\in\mathcal{C}^{all}}T_{\mathcal{C},t}=t$.

We use
\begin{align}
&\bm{V}_t=\lambda\bm{I}+\sum_{s=1}^{t}\bm{x}_s\bm{x}_s^{\top},\\
&\bm{\beta}_t=\sum_{s=1}^{t}y_{s}\bm{x}_s,\\
&\bm{\theta}_t=\bm{V}_t^{-1}\bm{\beta}_t
\end{align}
to denote the statistical information maintained during the execution of OFUL.

\subsection{Technical Lemmas}
This section collects the key, previously established properties and lemmas that we will use in our proofs.
\begin{lemma}
\label{tech0}
For any positive semi-definite matrix $\bm{A}$ and any vector $\bm{x},\bm{y}$, we have $|\bm{x}^{\top}\bm{y}|\leq \left \|\bm{x}\right \|_{\bm{A}}\left \|\bm{y}\right \|_{\bm{A}^{-1}}$.
\end{lemma}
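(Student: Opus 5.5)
The statement to prove is Lemma~\ref{tech0}: for $\bm{A}$ positive semi-definite and any vectors $\bm{x},\bm{y}$, $|\bm{x}^{\top}\bm{y}|\leq \|\bm{x}\|_{\bm{A}}\|\bm{y}\|_{\bm{A}^{-1}}$. Since $\|\cdot\|_{\bm{A}^{-1}}$ requires $\bm{A}^{-1}$ to exist, I will treat $\bm{A}$ as positive definite. This is the only case used later, where $\bm{A}=\bar{\bm{V}}_{\mathcal{C},t}$ or $\bm{V}_t$ carries the regularizer $\lambda\bm{I}$. The plan is to reduce the claim to the ordinary Euclidean Cauchy--Schwarz inequality through a symmetric square root of $\bm{A}$.

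\textbf{Square root.} Since $\bm{A}$ is symmetric positive definite, it has a spectral decomposition $\bm{A}=\bm{Q}\bm{\Lambda}\bm{Q}^{\top}$ with $\bm{Q}$ orthogonal and $\bm{\Lambda}$ diagonal with strictly positive entries. Define $\bm{A}^{1/2}=\bm{Q}\bm{\Lambda}^{1/2}\bm{Q}^{\top}$ and $\bm{A}^{-1/2}=\bm{Q}\bm{\Lambda}^{-1/2}\bm{Q}^{\top}$. Both are symmetric, and they satisfy $\bm{A}^{1/2}\bm{A}^{-1/2}=\bm{I}$, $(\bm{A}^{1/2})^2=\bm{A}$ and $(\bm{A}^{-1/2})^2=\bm{A}^{-1}$.

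\textbf{Factoring the inner product.} Write
\[
\bm{x}^{\top}\bm{y}=\bm{x}^{\top}\bm{A}^{1/2}\bm{A}^{-1/2}\bm{y}=(\bm{A}^{1/2}\bm{x})^{\top}(\bm{A}^{-1/2}\bm{y}).
\]
Euclidean Cauchy--Schwarz then gives $|\bm{x}^{\top}\bm{y}|\leq\|\bm{A}^{1/2}\bm{x}\|_2\,\|\bm{A}^{-1/2}\bm{y}\|_2$.

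\textbf{Identifying the norms.} By symmetry of the square roots,
\[
\|\bm{A}^{1/2}\bm{x}\|_2^2=\bm{x}^{\top}\bm{A}\bm{x}=\|\bm{x}\|_{\bm{A}}^2,
\qquad
\|\bm{A}^{-1/2}\bm{y}\|_2^2=\bm{y}^{\top}\bm{A}^{-1}\bm{y}=\|\bm{y}\|_{\bm{A}^{-1}}^2.
\]
Taking square roots and substituting into the previous bound completes the argument.

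The only potential obstacle is the degenerate semi-definite case, where $\bm{A}^{-1}$ does not exist. If needed, one can read $\bm{A}^{-1}$ as the pseudo-inverse and $\bm{y}$ as lying in the range of $\bm{A}$. Alternatively, apply the result to $\bm{A}+\epsilon\bm{I}$ and let $\epsilon\to0$. Neither variant is needed for the regret analysis, since every matrix the lemma is applied to is positive definite.
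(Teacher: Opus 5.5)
Your proof is correct and follows the paper's argument exactly: you insert $\bm{A}^{1/2}\bm{A}^{-1/2}$, apply Euclidean Cauchy--Schwarz, and identify $\|\bm{A}^{1/2}\bm{x}\|_2$ and $\|\bm{A}^{-1/2}\bm{y}\|_2$ with $\|\bm{x}\|_{\bm{A}}$ and $\|\bm{y}\|_{\bm{A}^{-1}}$. Your remark that $\bm{A}$ must be positive definite for $\bm{A}^{-1}$ to exist sensibly tightens the paper's statement. It also covers every use of the lemma later in the paper, since all of those matrices contain the regularizer $\lambda\bm{I}$.
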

\begin{proof}
\begin{align}
|\bm{x}^{\top}\bm{y}|
&=\bm{x}^{\top}\bm{A}^{\frac{1}{2}}\bm{A}^{-\frac{1}{2}}\bm{y}\\
&\leq \left \|\bm{A}^{\frac{1}{2}}\bm{x} \right \|_2\left \|\bm{A}^{-\frac{1}{2}}\bm{y} \right \|_2 \text{ (Cauchy-Schwartz inequality)}\\
&=\left \|\bm{x}\right \|_{\bm{A}}\left \|\bm{y}\right \|_{\bm{A}^{-1}}.
\end{align}
\end{proof}

\begin{lemma}
\label{tech1}
(Theorem 2 in \cite{OFUL}) For any $\delta>0$, with probability at least $1-\delta$, for all $t\geq 0$, $\bm{\theta}^*$ lies in the set
\begin{align}
    \left\{\bm{\theta}\in \mathbb{R}^d: \left \|\hat{\bm{\theta}}_t- \bm{\theta} \right \|_{\bm{V}_t} \leq R\sqrt{2\log\left(\frac{det(\bm{V}_t)^{1/2}det(\lambda\bm{I})^{-1/2}}{\delta} \right)}+\sqrt{\lambda} S \right\}
\end{align}
\end{lemma}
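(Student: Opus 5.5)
Lemma \ref{tech1} is the last statement given, and it is a cited result: Theorem 2 of \cite{OFUL}. So the plan is to recall how that self-normalized confidence bound is proved, rather than derive anything new.

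\textbf{Decomposition.} Write $\bm{\beta}_t=\bm{X}_t^{\top}\bm{X}_t\bm{\theta}^*+\bm{X}_t^{\top}\bm{\eta}_t$. Then
\[
\hat{\bm{\theta}}_t-\bm{\theta}^*=\bm{V}_t^{-1}\bm{X}_t^{\top}\bm{\eta}_t-\lambda\bm{V}_t^{-1}\bm{\theta}^*.
\]
For any $\bm{x}$, Lemma \ref{tech0} with $\bm{A}=\bm{V}_t^{-1}$ gives
\[
|\bm{x}^{\top}(\hat{\bm{\theta}}_t-\bm{\theta}^*)|\le \|\bm{x}\|_{\bm{V}_t^{-1}}\left(\|\bm{X}_t^{\top}\bm{\eta}_t\|_{\bm{V}_t^{-1}}+\lambda\|\bm{\theta}^*\|_{\bm{V}_t^{-1}}\right).
\]
Choose $\bm{x}=\bm{V}_t(\hat{\bm{\theta}}_t-\bm{\theta}^*)$ and divide through. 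This yields
\[
\|\hat{\bm{\theta}}_t-\bm{\theta}^*\|_{\bm{V}_t}\le \|\bm{X}_t^{\top}\bm{\eta}_t\|_{\bm{V}_t^{-1}}+\sqrt{\lambda}S,
\]
where the bias term uses $\bm{V}_t\succeq\lambda\bm{I}$, so $\lambda\|\bm{\theta}^*\|_{\bm{V}_t^{-1}}\le\sqrt{\lambda}\|\bm{\theta}^*\|_2\le\sqrt{\lambda}S$.

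\textbf{Noise term (the main work).} Let $\bm{s}_t=\sum_{s\le t}\eta_s\bm{x}_s$. For fixed $\bm{u}\in\mathbb{R}^d$, define
\[
M_t^{\bm{u}}=\exp\left(\bm{u}^{\top}\bm{s}_t/R-\tfrac12\|\bm{u}\|^2_{\sum_{s\le t}\bm{x}_s\bm{x}_s^{\top}}\right).
\]
By conditional $R$-sub-Gaussianity of $\eta_t$, $M_t^{\bm{u}}$ is a nonnegative supermartingale with $\mathbb{E}M_t^{\bm{u}}\le 1$.

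Next apply the method of mixtures: integrate $\bm{u}$ against $\mathcal{N}(0,\lambda^{-1}\bm{I})$. Completing the square shows the mixture equals
\[
\left(\frac{\det(\lambda\bm{I})}{\det(\bm{V}_t)}\right)^{1/2}\exp\left(\tfrac{1}{2R^2}\|\bm{s}_t\|^2_{\bm{V}_t^{-1}}\right),
\]
and it is still a supermartingale with expectation at most $1$.

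To get the bound uniformly over all $t$, use the stopping time $\tau=\min\{t:\text{the bound fails}\}$. Apply Fatou's lemma to $M_\tau$ and then Markov's inequality. With probability at least $1-\delta$, for all $t\ge0$,
\[
\|\bm{s}_t\|^2_{\bm{V}_t^{-1}}\le 2R^2\log\left(\det(\bm{V}_t)^{1/2}\det(\lambda\bm{I})^{-1/2}/\delta\right).
\]
Combining this with the decomposition gives the claimed set.

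\textbf{Obstacle.} The main obstacle is the uniform-in-time step: the optional stopping and Fatou argument for the mixture supermartingale. The mixture is needed so the bound holds for the data-dependent direction without a union bound. Completing the Gaussian integral is routine.
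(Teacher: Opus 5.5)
Your proposal is correct and follows the standard argument. The paper gives no proof of its own and simply cites Theorem 2 of \cite{OFUL}, whose proof is your decomposition $\hat{\bm{\theta}}_t-\bm{\theta}^*=\bm{V}_t^{-1}\bm{X}_t^{\top}\bm{\eta}_t-\lambda\bm{V}_t^{-1}\bm{\theta}^*$ (bias bounded by $\sqrt{\lambda}S$), combined with the self-normalized bound obtained from the $\mathcal{N}(0,\lambda^{-1}\bm{I})$ mixture and the stopping-time/Fatou argument, and your computations (the choice $\bm{x}=\bm{V}_t(\hat{\bm{\theta}}_t-\bm{\theta}^*)$, the completed-square determinant factor, the Markov step) all check out.
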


\begin{lemma}
\label{tech2}
According to \cite{OFUL} and \cite{SOFUL}, with sketch size $l$, we have
\begin{align}
\nonumber
\sum_{t=1}^{T}\min\{1,\left \|\bm{x}_t \right \|_{\bm{V}_t^{-1}}^2\} 
&\leq 2\ln\frac{det(\bm{V}_t)}{det(\lambda\bm{I})}\\
\nonumber
&\leq 2d\ln\left (1+\varepsilon_l \right )+2l\ln\left (1+\frac{tL^2}{l\lambda} \right ).
\end{align}
where $\varepsilon_l=\min_{k=0,...,l-1}\frac{\lambda_{d-k}+\lambda_{d-k+1}+...+\lambda_{d}}{\lambda(l-k)}$ and $\lambda_1>...>\lambda_d$ are the eigenvalues of the correlation matrix $\bm{V}_t=\lambda\bm{I}+\sum_{s=1}^{t}\bm{x}\bm{x}^{\top}$, $\varepsilon_l$ is upper bounded by the spectral tail of the covariance matrix under $l$ sketch size.
\end{lemma}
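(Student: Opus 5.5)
The plan is to prove the two inequalities separately. The first is the elliptical potential lemma of \cite{OFUL}. The second is an eigenvalue-splitting argument in the spirit of \cite{SOFUL}, which separates the $l$-dimensional "head" of the spectrum from the tail.

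\textbf{First inequality.} I will use the potential $\bm{V}_{t-1}$ before the update; the statement's $\bm{V}_t^{-1}$ should be read this way, as in Lemma 11 of \cite{OFUL}.
\begin{itemize}
\item By the matrix determinant lemma, $\det(\bm{V}_t)=\det(\bm{V}_{t-1})\bigl(1+\|\bm{x}_t\|^2_{\bm{V}_{t-1}^{-1}}\bigr)$.
\item Telescoping gives $\ln\frac{\det(\bm{V}_T)}{\det(\lambda\bm{I})}=\sum_{t=1}^T\ln\bigl(1+\|\bm{x}_t\|^2_{\bm{V}_{t-1}^{-1}}\bigr)$.
\item The elementary bound $\min\{1,u\}\le 2\ln(1+u)$ for $u\ge 0$ then yields the first inequality term by term.
\end{itemize}
If one insists on $\bm{V}_t$ itself, the claim is still true: $\|\bm{x}_t\|_{\bm{V}_t^{-1}}\le\|\bm{x}_t\|_{\bm{V}_{t-1}^{-1}}$ because $\bm{V}_t\succeq\bm{V}_{t-1}$.

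\textbf{Second inequality.} Let $\lambda_1\ge\dots\ge\lambda_d$ be the eigenvalues of $\bm{V}_t$, so that $\ln\frac{\det\bm{V}_t}{\det(\lambda\bm{I})}=\sum_{i=1}^d\ln(\lambda_i/\lambda)$. Fix $k\in\{0,\dots,l-1\}$ and split the indices into a head $H$ of the $m=l-k\le l$ largest eigenvalues and the complementary tail $\mathcal{T}$.
\begin{itemize}
\item \emph{Head.} Concavity of $\ln$ (AM--GM) gives $\sum_{i\in H}\ln(\lambda_i/\lambda)\le m\ln\bigl(\frac{1}{m\lambda}\sum_{i\in H}\lambda_i\bigr)$.
\item Each $\lambda_i-\lambda$ is an eigenvalue of $\sum_s\bm{x}_s\bm{x}_s^\top$, so $\sum_{i\in H}(\lambda_i-\lambda)\le\mathrm{tr}\sum_s\bm{x}_s\bm{x}_s^\top\le tL^2$. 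Hence the head contributes at most $m\ln\bigl(1+\frac{tL^2}{m\lambda}\bigr)$.
\item The map $m\mapsto m\ln(1+c/m)$ is nondecreasing, so this is at most $l\ln\bigl(1+\frac{tL^2}{l\lambda}\bigr)$.
\item \emph{Tail.} Again by concavity, $\sum_{i\in\mathcal{T}}\ln\bigl(1+\frac{\lambda_i-\lambda}{\lambda}\bigr)\le|\mathcal{T}|\ln\bigl(1+\frac{1}{|\mathcal{T}|\lambda}\sum_{i\in\mathcal{T}}(\lambda_i-\lambda)\bigr)\le d\ln(1+\varepsilon)$. Here $\varepsilon$ is the normalized tail mass and I use $|\mathcal{T}|\le d$.
\end{itemize}
Multiplying by $2$ and minimizing over $k$ gives the stated form with $\varepsilon_l$.

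\textbf{Main obstacle.} The delicate step is the bookkeeping that makes the tail average coincide with the paper's $\varepsilon_l$. The definition $\varepsilon_l=\min_k\frac{\lambda_{d-k}+\dots+\lambda_d}{\lambda(l-k)}$ groups $k+1$ smallest eigenvalues and normalizes by $l-k$, which is the FD-style count from \cite{SOFUL}.

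To match it, my first attempt will be:
\begin{itemize}
\item bound the tail average using $\lambda_i-\lambda\le\lambda_i$;
\item use that $u\mapsto|\mathcal{T}|\ln(1+a/|\mathcal{T}|)$ is increasing, so the tail count in the denominator may be replaced by a smaller one.
\end{itemize}
This allows dividing by $l-k$ in place of the actual tail size. If the indexing still does not align exactly, I will state the bound with the natural tail quantity $\varepsilon=\frac{1}{|\mathcal{T}|\lambda}\sum_{i\in\mathcal{T}}(\lambda_i-\lambda)$ and note that it is dominated by $\varepsilon_l$ as defined in \cite{SOFUL}. The head term and the first inequality are routine.
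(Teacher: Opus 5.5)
\textbf{The matching step fails, and it cannot be fixed, because the second inequality is false as written.}

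Your first inequality and your head bound are correct. The first inequality is the elliptical-potential argument, i.e.\ Lemma 11 of OFUL; that lemma and SOFUL are all the paper offers, by citation. The step you call bookkeeping is where the gap is. The numerator $\lambda_{d-k}+\dots+\lambda_d$ of the stated $\varepsilon_l$ sums only the $k+1$ smallest eigenvalues of the \emph{regularized} matrix $\bm{V}_t$. It omits the large eigenvalues $\lambda_{l-k+1},\dots,\lambda_{d-k-1}$ of your tail. So your trick ($\lambda_i-\lambda\le\lambda_i$ plus a smaller denominator) only works when $d\le l+1$. Since every $\lambda_i\ge\lambda$, the literal $\varepsilon_l$ is at least $1/l$, and it equals $1/l$ whenever the pulled arms do not span $\mathbb{R}^d$.

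Here is a concrete counterexample. Take $d=3$, $l=1$, $\lambda=L=1$, $t=1000$, with $\bm{x}_s=\bm{e}_1$ in $501$ rounds and $\bm{x}_s=\bm{e}_2$ in $499$ rounds. Then $\bm{V}_t=\mathrm{diag}(502,500,1)$ and $\varepsilon_1=\lambda_3/\lambda=1$. The middle term is $2\ln\frac{\det\bm{V}_t}{\det(\lambda\bm{I})}=2\ln(502\cdot 500)\approx 24.9$, while the stated right-hand side is $2d\ln 2+2\ln 1001\approx 18.0$. More generally, if the pulled arms spread evenly over $r$ orthonormal directions with $l<r<d$, the log-determinant grows like $r\ln t$, but the stated right-hand side grows only like $l\ln t$. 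Note also that $\varepsilon_l\ge 1/l>0$ contradicts the paper's later use of ``$\varepsilon_l=0$''.

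\textbf{How to repair it.} The statement garbles SOFUL's Frequent Directions quantity, which is built from the unregularized spectrum:
\[
\varepsilon_l=\min_{0\le k<l}\frac{1}{\lambda(l-k)}\sum_{i>k}(\lambda_i-\lambda).
\]
Your head/tail argument proves this corrected version directly if the head is the top $k$ eigenvalues rather than the top $l-k$:
\begin{itemize}
\item The head contributes at most $k\ln\bigl(1+\frac{tL^2}{k\lambda}\bigr)\le l\ln\bigl(1+\frac{tL^2}{l\lambda}\bigr)$, and nothing if $k=0$.
\item By concavity, and using $l-k\le d-k$, the tail contributes at most $(d-k)\ln\bigl(1+\frac{1}{(d-k)\lambda}\sum_{i>k}(\lambda_i-\lambda)\bigr)\le d\ln\bigl(1+\frac{1}{(l-k)\lambda}\sum_{i>k}(\lambda_i-\lambda)\bigr)$.
\item Then minimize over $k$.
\end{itemize}

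\textbf{Comparison with the cited route.} The SOFUL-style route uses the sketch guarantee $\bm{X}^{\top}\bm{X}-\bm{S}^{\top}\bm{S}\preceq\Delta\bm{I}$ to get $\bm{V}_t\preceq(1+\Delta/\lambda)(\lambda\bm{I}+\bm{S}^{\top}\bm{S})$, then applies AM--GM to the rank-$l$ part. Your route needs no sketch and is more elementary.

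Your fallback with head size $l$ is even sharper, and it is what the paper actually needs. Its tail term vanishes whenever $\mathrm{rank}\bigl(\sum_s\bm{x}_s\bm{x}_s^{\top}\bigr)\le l$, which is exactly the within-cluster situation. Either way, state explicitly that you are proving a corrected statement, not the one as written.
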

\subsection{Proof of Lemma \ref{lem:main1}}
We first derive the following lemma from Lemma 10 in \cite{OFUL}:
\begin{lemma}
\label{lem:sketch trace}
(Sketch-Trace Inequality) For a given sketch size $l$ and any cluster $\mathcal{C}\in \mathcal{C}^{all}$, we have
\begin{align}
det(\bm{V}_{\mathcal{C},t})=det(\bar{\bm{V}}_{\mathcal{C},t})\leq \left(\lambda+tL^2/l \right)^{l}\cdot \lambda^{d-l}.
\end{align}
\end{lemma}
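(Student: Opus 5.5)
We prove the two parts of the statement separately: first the equality $det(\bm{V}_{\mathcal{C},t})=det(\bar{\bm{V}}_{\mathcal{C},t})$, then the upper bound through an AM--GM argument on the nonzero eigenvalues.

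\textbf{Equality.} We would show $\bm{S}_{\mathcal{C},t}^{\top}\bm{S}_{\mathcal{C},t}=\bm{X}_{\mathcal{C},t}^{\top}\bm{X}_{\mathcal{C},t}$ for every $t$ by induction on the rounds in which $\mathcal{C}$ is selected.
\begin{itemize}
\item By the warm-up (Algorithm \ref{alg:Warm-up}), every arm of $\mathcal{C}$ lies in a subspace $\mathcal{U}_{\mathcal{C}}$ of dimension at most $l$. Hence every row of $\bm{X}_{\mathcal{C},t}$ lies in $\mathcal{U}_{\mathcal{C}}$, and $\mathrm{rank}(\bm{X}_{\mathcal{C},t})\le l$.
\item For the inductive step, suppose the sketch is exact at time $t$ and $\bm{x}_t\in\mathcal{C}$. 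The stacked matrix $\text{vstack}[\bm{S}_{\mathcal{C},t},\bm{x}_t^{\top}]$ then has Gram matrix $\bm{X}_{\mathcal{C},t+1}^{\top}\bm{X}_{\mathcal{C},t+1}$, whose rank is at most $l$. So its $(l+1)$-th singular value is $0$.
\item Truncating to the top $l$ singular directions therefore discards nothing. This is Property 3 of \cite{SOFUL}: FD is exact when the rank does not exceed $l$. Thus $\bm{S}_{\mathcal{C},t+1}^{\top}\bm{S}_{\mathcal{C},t+1}=\bm{X}_{\mathcal{C},t+1}^{\top}\bm{X}_{\mathcal{C},t+1}$.
\item In rounds where $\mathcal{C}$ is not selected, both sides are unchanged, so the identity persists.
\end{itemize}
This gives $\bm{V}_{\mathcal{C},t}=\bar{\bm{V}}_{\mathcal{C},t}$, and in particular equal determinants.

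\textbf{Bound.} Let $\mu_1,\dots,\mu_d$ be the eigenvalues of $\bm{X}_{\mathcal{C},t}^{\top}\bm{X}_{\mathcal{C},t}$. At most $l$ of them are nonzero, by the rank bound above. Then
\[
det(\bm{V}_{\mathcal{C},t})=\lambda^{d-l}\prod_{i=1}^{l}(\lambda+\mu_i),
\]
where we index the (possibly zero) top $l$ eigenvalues. By AM--GM,
\[
\prod_{i=1}^{l}(\lambda+\mu_i)\le\Bigl(\lambda+\tfrac{1}{l}\sum_i\mu_i\Bigr)^{l}.
\]
Moreover
\[
\sum_i\mu_i=\mathrm{tr}(\bm{X}_{\mathcal{C},t}^{\top}\bm{X}_{\mathcal{C},t})=\sum_{s\le t,\,\bm{x}_s\in\mathcal{C}}\|\bm{x}_s\|^2\le T_{\mathcal{C},t}L^2\le tL^2.
\]
Combining these gives $det(\bm{V}_{\mathcal{C},t})\le(\lambda+tL^2/l)^{l}\lambda^{d-l}$, mirroring Lemma 10 of \cite{OFUL} with $d$ replaced by $l$ on the active subspace.

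\textbf{Main obstacle.} The delicate step is the exactness claim. It needs two facts from Algorithm \ref{alg:FD}:
\begin{itemize}
\item The output sketch reproduces the Gram matrix exactly when the stacked rank is at most $l$. The algorithm writes $\sqrt{\hat{\bm{\Sigma}}}$, which should be read as using the singular values so that $\bm{S}'^{\top}\bm{S}'=\bm{V}\hat{\bm{\Sigma}}^2\bm{V}^{\top}$.
\item The rank bound holds for the cluster's full span, not just for the sketch built during warm-up. Each arm was accepted only if the rank stayed at most $l$, and the online sketch is re-initialized to zero. So the online Gram matrix lies in the span of the cluster's arms, which has dimension at most $l$.
\end{itemize}
I would state both facts explicitly before running the induction.
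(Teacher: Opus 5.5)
Your proposal is correct and follows the paper's proof essentially step for step. Exactness of the per-cluster sketch comes from Property 3 of \cite{SOFUL} together with the rank-$l$ guarantee of the warm-up, which gives $\bm{V}_{\mathcal{C},t}=\bar{\bm{V}}_{\mathcal{C},t}$; then AM--GM on the top $l$ eigenvalues, combined with $\mathrm{tr}(\bm{X}_{\mathcal{C},t}^{\top}\bm{X}_{\mathcal{C},t})\le T_{\mathcal{C},t}L^2\le tL^2$, gives the bound, and your explicit induction and span argument (online sketch restarted at zero, fed only arms from the cluster's at-most-$l$-dimensional span) spell out what the paper asserts in one line.
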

\begin{proof}
First, according to Proposition 3 in \cite{SOFUL}, no singular value truncation is performed during the sketch update process of CS-LB. Therefore, we have
\begin{align}
    \bm{S}_{\mathcal{C},t}^{\top}\bm{S}_{\mathcal{C},t}=\bm{X}_{\mathcal{C},t}^{\top}\bm{X}_{\mathcal{C},t}=\sum_{s=1,\bm{x}_s\in \mathcal{C}}^{t}\bm{x}_s\bm{x}_s^{\top}.
\end{align}
Then we have
\[
\bm{V}_{\mathcal{C},t}=\bar{\bm{V}}_{\mathcal{C},t}.
\]
and according to the process of Algorithm \ref{alg:Warm-up}, the rank of $\bm{S}_{\mathcal{C},t}^{\top}\bm{S}_{\mathcal{C},t}$ is less than $l$.

Let $\alpha_1,...,\alpha_d$ be the eigenvalues of $\bm{V}_{\mathcal{C},t}$ and $\widetilde{\alpha}_1,...,\widetilde{\alpha}_d$ be the eigenvalues of $\bm{S}_{\mathcal{C},t}^{\top}\bm{S}_{\mathcal{C},t}$. Because $rank(\bm{S}_{\mathcal{C},t}^{\top}\bm{S}_{\mathcal{C},t})\leq l$, then, by the properties of singular value decomposition, we obtain
\[
\alpha_1,...,\alpha_l=\lambda+\widetilde{\alpha}_1,...,\lambda+\widetilde{\alpha}_d,
\]
and
\[
\alpha_{l+1}=...=\alpha_{d}=\lambda.
\]
Then $det(\bm{V}_{\mathcal{C},t})=\prod_{i=1}^{l}\alpha_i \cdot \lambda^{d-l}$. We have
\begin{align}
    \prod_{i=1}^{l}\alpha_i
    &\leq \left(\frac{\sum_{i=1}^{l}\alpha_i}{l} \right)^{l}\\
&=\left(\frac{l\lambda+\sum_{i=1}^{l}\widetilde{\alpha}_i}{l} \right)^{l}\\
&=\left(\frac{l\lambda+trace(\bm{S}_{\mathcal{C},t}^{\top}\bm{S}_{\mathcal{C},t})}{l} \right)^{l}\\
&=\left(\frac{l\lambda+trace(\bm{X}_{\mathcal{C},t}^{\top}\bm{X}_{\mathcal{C},t})}{l} \right)^{l}.
\end{align}
The first inequality is because AM-GM inequality. According to Lemma 10 in \cite{OFUL}, we have $trace(\bm{X}_{\mathcal{C},t}^{\top}\bm{X}_{\mathcal{C},t})\leq \sum_{s=1,\bm{x}_s\in \mathcal{C}}^{t}L^2\leq T_{\mathcal{C},t}L^2\leq tL^2$.
Substituting the inequality yields
\begin{align}
det(\bm{V}_{\mathcal{C},t})=det(\bar{\bm{V}}_{\mathcal{C},t})\leq \left(\lambda+tL^2/l \right)^{l}\cdot \lambda^{d-l}.
\end{align}
\end{proof}
Combining Lemma \ref{tech1} and Lemma \ref{lem:sketch trace}, because $\bm{V}_{\mathcal{C},t}=\bar{\bm{V}}_{\mathcal{C},t}$, we obtain that for each cluster $\mathcal{C}$, with probability at least $1-\delta'$, the following holds:
\begin{align}
    \left \|\bm{\hat{\theta}}_{\mathcal{C},t}- \bm{\theta}^* \right \|_{\bm{V}_{\mathcal{C},t}} 
    &\leq R\sqrt{2\log\left(\frac{det(\bm{V}_{\mathcal{C},t})^{1/2}det(\lambda\bm{I})^{-1/2}}{\delta'} \right)}+\sqrt{\lambda} S\\
    &\leq R\sqrt{2\log\left(\frac{\left(\lambda^{d-l}\left(\lambda+tL^2/l \right)^{l}\right)^{1/2}\lambda^{-d/2}}{\delta'} \right)}+\sqrt{\lambda} S\\
    &= R\sqrt{2l\log(1+\frac{tL^2}{l\lambda})+2\log\frac{1}{\delta'}}+S\sqrt{\lambda}.
\end{align}
Let $\delta'=\frac{\delta}{N_{\mathcal{A}}^l}$, then for any $\mathcal{C}\in \mathcal{C}^{all}$, with probability at least $1-\delta$, we have
\[
\left \|\bm{\hat{\theta}}_{\mathcal{C},t}- \bm{\theta}^* \right \|_{\bar{\bm{V}}_{\mathcal{C},t}}=\left \|\bm{\hat{\theta}}_{\mathcal{C},t}- \bm{\theta}^* \right \|_{\bm{V}_{\mathcal{C},t}}\leq R\sqrt{2l\log(1+\frac{tL^2}{l\lambda})+2\log\frac{N_{\mathcal{A}}^l}{\delta}}+S\sqrt{\lambda}. 
\]
Because $N_{\mathcal{A}}^l\leq N/r$, we finish the proof of Lemma \ref{the:main}.
\subsection{Proof of Theorem \ref{the:main}}
According to Lemma \ref{the:main} and sentinel strategy, with probability at least $1-\delta$,the upper confidence bound satisfies
\begin{align}
    \bm{x}^{\top}\bm{\theta^*}
    &\leq \bm{x}^{\top}\bm{\hat{\theta}}_{\mathcal{C}^{\bm{x}},t}+\beta_t(\delta)\left\|\bm{x} \right\|_{\bm{V}_{\mathcal{C}^{\bm{x}},t}^{-1}}\\
&=\bm{x}^{\top}\bm{\hat{\theta}}_{\mathcal{C}^{\bm{x}},t}+\beta_t(\delta)\left\|\bm{x} \right\|_{\bar{\bm{V}}_{\mathcal{C}^{\bm{x}},t}^{-1}}\\
&\leq U(\mathcal{C}^{\bm{x}}).
\end{align}
According to the arm selection policy of CS-LB, we have
\begin{align}
\bm{x}_t^{\top}\bm{\hat{\theta}}_{\mathcal{W}_t,t}+\beta_t(\delta)\left\|\bm{x}_t \right\|_{\bar{\bm{V}}_{\mathcal{W}_t,t}^{-1}}=
U(\mathcal{W}_t)=\max\{U(\mathcal{C})\}=U(\mathcal{C}^{\bm{x}_t})=
\bm{x}_t^{\top}\bm{\hat{\theta}}_{\mathcal{C}^{\bm{x}_t},t}+\beta_t(\delta)\left\|\bm{x}_t \right\|_{\bar{\bm{V}}_{\mathcal{C}^{\bm{x}_t},t}^{-1}}.
\end{align}
Then
\begin{align}
    (\bm{x}^*)^{\top}\bm{\theta}^*
    &\leq (\bm{x}^*)^{\top}\bm{\hat{\theta}}_{\mathcal{C}^{\bm{x}^*},t}+\beta_t(\delta)\left\|\bm{x}^* \right\|_{\bar{\bm{V}}_{\mathcal{C}^{\bm{x}^*},t}^{-1}}\\
    &\leq U(\mathcal{C}^{\bm{x}^*})\\
    &\leq U(\mathcal{W}_t)\\
    &= \bm{x}_t^{\top}\bm{\hat{\theta}}_{\mathcal{C}^{\bm{x}_t},t}+\beta_t(\delta)\left\|\bm{x}_t \right\|_{\bar{\bm{V}}_{\mathcal{C}^{\bm{x}_t},t}^{-1}}.
\end{align}
Then, for the per-round regret $regret_t$ in round $t$, we can get
\begin{align}
    regret_t
    &=\max_{\bm{x}\in \mathcal{A}}\left \langle\bm{x},\bm{\theta}^*\right \rangle-\left \langle\bm{x}_t,\bm{\theta}^*\right \rangle\\
    &=\left \langle\bm{x}^*,\bm{\theta}^*\right \rangle-\left \langle\bm{x}_t,\bm{\theta}^*\right \rangle\\
    &\leq \left \langle \bm{x}_t,\bm{\hat{\theta}}_{\mathcal{C}^{\bm{x}_t},t}\right\rangle-\left \langle\bm{x}_t,\bm{\theta}^*\right \rangle+\beta_t(\delta)\left\|\bm{x}_t \right\|_{\bar{\bm{V}}_{\mathcal{C}^{\bm{x}_t},t}^{-1}}\\
    &=\left \langle\bm{x}_t,\bm{\hat{\theta}}_{\mathcal{C}^{\bm{x}_t},t}-\bm{\theta}^*\right \rangle+\beta_t(\delta)\left\|\bm{x}_t \right\|_{\bar{\bm{V}}_{\mathcal{C}^{\bm{x}_t},t}^{-1}}.
\end{align}
According to Lemma \ref{tech0}, we have
\begin{align}
    \left \langle\bm{x}_t,\bm{\hat{\theta}}_{\mathcal{C}^{\bm{x}_t},t}-\bm{\theta}^*\right \rangle\leq \left\|\bm{x}_t \right\|_{\bar{\bm{V}}_{\mathcal{C}^{\bm{x}_t},t}^{-1}}\left\|\bm{\hat{\theta}}_{\mathcal{C}^{\bm{x}_t},t}-\bm{\theta}^* \right\|_{\bar{\bm{V}}_{\mathcal{C}^{\bm{x}_t},t}}\leq \beta_t(\delta)\left\|\bm{x}_t \right\|_{\bar{\bm{V}}_{\mathcal{C}^{\bm{x}_t},t}^{-1}}.
\end{align}
Substituting into the above gives
\begin{align}
    regret_t\leq 2\beta_t(\delta)\left\|\bm{x}_t \right\|_{\bar{\bm{V}}_{\mathcal{C}^{\bm{x}_t},t}^{-1}}.
\end{align}
Since the regret $regret_t\leq 2LS$ and $\beta_t(\delta)>\sqrt{\lambda}S$, then
\begin{align}
    Regret(T)
    &=\sum_{t=1}^{T}regret_t\\
    &\leq 2\sum_{t=1}^{T} \min\{\beta_t(\delta)\left\|\bm{x}_t \right\|_{\bar{\bm{V}}_{\mathcal{C}^{\bm{x}_t},t}^{-1}},LS\} \\
    &\leq 2\max\{1,\frac{L}{\sqrt{\lambda}}\}\beta_T(\delta)\sum_{t=1}^{T}\min\{1,\left\|\bm{x}_t \right\|_{\bar{\bm{V}}_{\mathcal{C}^{\bm{x}_t},t}^{-1}}\}.
\end{align}
On the other hands, we have
\begin{align}
    \sum_{t=1}^{T}\min\{1,\left\|\bm{x}_t \right\|_{\bar{\bm{V}}_{\mathcal{C}^{\bm{x}_t},t}^{-1}}\}=\sum_{\mathcal{C}\in\mathcal{C}^{all}}\sum_{t=1,\bm{x}_t\in \mathcal{C}}^{T}\min\{1,\left\|\bm{x}_t \right\|_{\bar{\bm{V}}_{\mathcal{C},t}^{-1}}\}=\sum_{\mathcal{C}\in\mathcal{C}^{all}}\sum_{t=1,\bm{x}_t\in \mathcal{C}}^{T}\min\{1,\left\|\bm{x}_t \right\|_{\bm{V}_{\mathcal{C},t}^{-1}}\}.
\end{align}
By Cauchy-Schwartz inequality
\begin{align}
    \sum_{t=1,\bm{x}_t\in \mathcal{C}}^{T}\min\{1,\left\|\bm{x}_t \right\|_{\bm{V}_{\mathcal{C},t}^{-1}}\}\leq \sqrt{T_{\mathcal{C},T}\sum_{t=1,\bm{x}_t\in \mathcal{C}}^{T}\min\{1,\left\|\bm{x}_t \right\|_{\bm{V}_{\mathcal{C},t}^{-1}}^2\}}.
\end{align}
Since $rank(\bm{X}_{\mathcal{C},t}^{\top}\bm{X}_{\mathcal{C},t})=rank(\bm{S}_{\mathcal{C},t}^{\top}\bm{S}_{\mathcal{C},t})\leq l$ for each cluster $\mathcal{C}$, we have $\varepsilon_l=0$ within that cluster. Thus according to Lemma \ref{tech2} we have
\begin{align}
    \sum_{t=1,\bm{x}_t\in \mathcal{C}}^{T}\min\{1,\left\|\bm{x}_t \right\|_{\bm{V}_{\mathcal{C},t}^{-1}}^2\}\leq 2l\ln\left (1+\frac{tL^2}{l\lambda} \right ).
\end{align}
Combining the above inequalities yields
\begin{align}
    \sum_{t=1}^{T}\min\{1,\left\|\bm{x}_t \right\|_{\bar{\bm{V}}_{\mathcal{C}^{\bm{x}_t},t}^{-1}}\}
    \leq \sqrt{2l\ln\left (1+\frac{tL^2}{l\lambda} \right )}\sum_{\mathcal{C}\in\mathcal{C}^{all}}\sqrt{T_{\mathcal{C},T}}.
\end{align}
By Cauchy-Schwartz inequality and the fact that $\sum_{\mathcal{C}\in\mathcal{C}^{all}}T_{\mathcal{C},T}=T$, we can get
\[
\sum_{\mathcal{C}\in\mathcal{C}^{all}}\sqrt{T_{\mathcal{C},T}}\leq \sqrt{N_{\mathcal{A}}^lT}.
\]
At the same time, we known that $\beta_T(\delta)=\widetilde{O}(\sqrt{l})$ and we set $\lambda\geq \max\{1,L^2\}$; with probability at least $1-\delta$, we get that the regret is
\begin{align}
    Regret(T)\leq 2\beta_T(\delta)\sqrt{2lN_{\mathcal{A}}^lT\ln\left (1+\frac{TL^2}{l\lambda} \right )}=\widetilde{O}(l\sqrt{N_{\mathcal{A}}^lT}).
\end{align}
Thus, we complete the proof.
\end{document}